\documentclass{bmvc2k}
\usepackage{amssymb}
\usepackage{tikz}
\usepackage{pgfplots}
\usepackage{amsthm}
\usepackage{booktabs}
\usepackage{pgfplots}
\usepackage{array}
\usepackage{booktabs}
\usepackage{multirow}
\usepackage{pifont}
\usepackage{makecell}
\newcommand{\cmark}{\ding{51}}
\newcommand{\xmark}{\ding{55}}

\usetikzlibrary{calc,arrows.meta,decorations.text}
\usetikzlibrary{positioning, calc, arrows.meta} 
\definecolor{errorA}{RGB}{210, 140, 0}    
\definecolor{errorB}{RGB}{190, 50, 100}
\usepackage{xcolor}
\usepackage[normalem]{ulem}

\newcommand{\categoryerror}[1]{{\color{errorA}\uline{#1}}}
\newcommand{\temporalerror}[1]{{\color{errorB}\uline{#1}}}
\newcommand{\greenbordered}[1]{%
  \tikz\node[inner sep=1.5pt, draw=errorA, line width=1pt]{#1};%
}
\newcommand{\redbordered}[1]{%
  \tikz\node[inner sep=1.5pt, draw=errorB, line width=1pt]{#1};%
}

\usepackage{algorithm}
\usepackage{tcolorbox}  
\usepackage{cancel}
\usepackage{algpseudocode}
\newtheorem{definition}{Definition}
\newtheorem{theorem}{Theorem}
\newtheorem{lemma}[theorem]{Lemma}
\newtheorem{proposition}{Proposition}
\newtheorem{corollary}[theorem]{Corollary}

\title{Composed Historical Image Retrieval by Modeling Temporal Representations}

\addauthor{Adrià Molina}{amolina@cvc.uab.cat}{1,2}
\addauthor{Oriol Ramos Terrades}{oriolrt@cvc.uab.cat}{1,2}
\addauthor{Josep Lladós}{josep@cvc.uab.cat}{1,2}

\addinstitution{
 Centre de Visió per Computador\\
 Universitat Autònoma de Barcelona\\
 Bellaterra, Catalonia
}
\addinstitution{
 Computer Science Department\\
 Universitat Autònoma de Barcelona,\\
 Bellaterra, Catalonia
}

\runninghead{A. Molina \etal}{Composed Historical Image Retrieval}

\def\eg{\emph{e.g}\bmvaOneDot}

\def\etal{\emph{et al}\bmvaOneDot}
\definecolor{myTeal}  {HTML}{008080}
\definecolor{myAmber} {HTML}{C8872A}
\definecolor{myViolet}{HTML}{7B3F9E}
\begin{document}

\maketitle

\begin{abstract}
While time evolves linearly, the geometry of neural embedding spaces is inherently multi-dimensional, often chaotic, and difficult to interpret. In principle, one could constrain an embedding space to a single temporal dimension; however, such a reduction would sacrifice performance on downstream tasks, as one-dimensional embeddings cannot retain sufficient expressive capacity. This paper asks whether it is possible to learn representations that preserve temporal structure while remaining effective for image and object retrieval, and answers this question by building the mathematical foundations of such a system. We propose Temporally Decomposable Image Representations (TDIR), a representation learning algorithm that decomposes historical photographs into separate date and content components through orthogonal subspaces. We define and prove the conditions under which such a decomposition is achievable, characterize the error incurred when those conditions are only partially met, and show that orthogonality between temporal and categorical subspaces emerges naturally from the joint optimization, without requiring it to be imposed explicitly. Beyond its geometric properties, TDIR enables a class of transitive operations on embedding spaces: the temporal information of one image can be extracted and injected into the representation of another, with no label supervision required. All theoretical properties are grounded and validated in the real-world problem of Composed Image Retrieval on historical photographs, where a query simultaneously specifies object content and a target time period, either through labels or through example images. This in-the-wild setting serves as a concrete backing for the propositions we derive, offering an intuitive and interpretable way to navigate photographic archives while maintaining competitive performance in both date estimation and object retrieval.

\end{abstract}

\section{Introduction}
\label{sec:intro}
Despite photography emerging late during the 19th century, it still comprises an important portion of archival data. More precisely, around 5-10\% of archival material is estimated to be non-textual (see Appendix \ref{sec:how_many_photos}), despite covering a significantly shorter historical span than printed or handwritten documents. Unlike traditional historical records, however, photographs encode most of their information visually, making their description through discrete metadata inherently restrictive. In this context, many Digital Humanities projects aim to improve access to non-textual collections through semantic search \cite{yuan2025knowledge} or automatic metadata completion \cite{garcia2020contextnet}. Among such metadata descriptors, the date of a photograph is one of the most prominent task in historical photography analysis \cite{palermo2012dating,muller2017picture,stacchio2020imago,paplham2026photo,barancova2023blind}, as it enables historians and social scientists to contextualise much of the information contained in the image.

Temporal information alone, however, is insufficient for a complete archival description. Archivists must also contextualise the depicted objects within their historical period. In archival science terms \cite{peterson2014probative}, the preservation interest of a document is partially determined by its \textit{probative value}: the capacity of the image to uniquely illustrate an event, person, technology, or phenomenon. This requires understanding not only \textit{when} a photograph was taken, but also \textit{how} the objects it contains relate to their historical moment. This archival reasoning has direct implications for image analysis. Methods for automatic date estimation have evolved from low-level visual predictors \cite{palermo2012dating} toward semantic and object-centric approaches \cite{net2024transformer}. We argue that the latter is closer to the reasoning process of archivists, who infer dates through the historical signatures of the objects appearing in the scene. Objects such as cars, clothes, or posters evolve at different temporal rhythms, allowing trained observers to identify inconsistencies or confirming regularities at a glance.

This object-sensitive reasoning, however, is difficult to replicate in standard neural representations, where colour, texture, objects, and date tend to be entangled in ways that obscure the individual temporal evolution of each category. More critically for archival practice, current retrieval systems do not allow a user to query an archive by \textit{composing} an object of interest with a target time period, as an archivist naturally would. As illustrated in Figure~\ref{fig:visabs}, the ideal system would support queries such as: \textit{``find images of jackets from the 1940s''}, or more powerfully, \textit{``find images of cars that look contemporaneous to this photograph of a politician''} --- without requiring any date label for the reference image.

\begin{figure}[t]
    \centering
    \setlength{\tabcolsep}{2pt} 

    \begin{tabular}{ccc}
        \resizebox{!}{0.32\textwidth}{\input{images/visual_abstract/visual_a}} &
        \resizebox{!}{0.32\textwidth}{\input{images/visual_abstract/visual_b}} &
        \resizebox{!}{0.32\textwidth}{\input{images/visual_abstract/visual_c}} \\
        (a) & (b) & (c)
    \end{tabular}

    \caption{Visual abstract of the three proposed representation objectives, showing composed image retrieval for query-by-label (a), query-by-example (b) and query-by-examples (c), in the latter, the idea is to inject the temporal information of an image into the object representation of the other query.}
    \label{fig:visabs}
\end{figure}

To enable this, we propose \textbf{Temporally Decomposable Image Representations (TDIR)},
a formal framework for representation learning in which the temporal and categorical 
components of an image embedding are separated into orthogonal subspaces. 
We formalise the conditions under which such a decomposition holds, prove that the 
required orthogonality emerges naturally from joint optimisation, and characterise 
the error incurred when those conditions are only partially met. 
These theoretical contributions are, to our knowledge, the first formal treatment of 
temporal decomposability in visual embeddings and stand independently of any specific 
application. Concretely, we establish two formal properties:

\begin{itemize}
    \item \textbf{Temporally Decomposable Image Representation:} an embedding space is 
    temporally decomposable when any representation can be expressed as a linear 
    combination of independent category and year vectors, such that each 
    component exclusively encodes its respective factor of variation.
    \item \textbf{Joint Proxy Optimization:} We propose a joint optimization which promotes TDIR in real case scenarios. Thus, the year centroid, interpreted as a displacement 
    vector, can transport any representation to a  different date without 
    altering its categorical content.
\end{itemize}

Beyond their theoretical interest, both properties admit a practically 
relevant instantiation: Composed Image Retrieval on in-the-wild historical photographs, where a 
query simultaneously specifies object content and a target time period. This provides empirical evidence that the derived properties yield interpretable behavior in applied settings beyond the theoretical formulation of the framework. 

\section{Related Work}
Early approaches to photographic date estimation relied on low-level visual cues such as colour histograms and film grain \cite{palermo2012dating}, while more recent methods leverage semantic features. Crucially, however, none of these works formalise the desired properties of an ideal temporal representation. Müller et al. introduce the DEW benchmark and treat date estimation as a regression problem \cite{muller2017picture}, without characterising the geometric structure that makes a representation well-suited for this task. Post-hoc analyses have revealed that temporal structure does emerge in general-purpose embeddings: In \cite{molina2022generic}, the authors propose a loss function that can re-organise vision embeddings according to a fixed temporal criteria, this temporal geometry is observed to naturally emerge through CLIP pretraining \cite{sonthalia2026rankability}, where embeddings carry rankable temporal information. Neither work describes how to induce such structure by design, nor what formal properties an ideally disentangled temporal representation should satisfy. TDIR addresses precisely this gap. A complementary line of work argues that date estimation should be grounded in the objects present in a scene, mirroring the reasoning of trained archivists. Ashida et al. \cite{ashida2021determining} propose an object-centred ensemble restricted to human subjects, and Net et al. \cite{net2024transformer} extend this idea to a broader set of object categories via a transformer-based architecture trained on DEW, introducing the DEW-B benchmark. However, both works treat object-centricity as a means to improve date prediction, not as a representational goal in its own right: object identity is discarded once the date is estimated. Furthermore, since \cite{net2024transformer} requires training a dedicated image encoder per object category, the method is evaluated on only six categories, which raises scalability concerns to larger object sets. By contrast, TDIR jointly preserves both categorical and temporal factors within a single unified backbone, enabling retrieval along either dimension independently and scaling naturally to a large number of object categories. Composed Image Retrieval (CIR), which involves querying by combining a reference image with a modification attribute, has been studied in general-domain settings \cite{wu2021fashion,ray2023cola}, but existing historical benchmarks do not support this paradigm. DEW provides date annotations without object labels \cite{muller2017picture}; IMAGO and yearbook-based datasets \cite{salem2016analyzing,stacchio2020imago,paplham2026photo} are restricted to faces and thus offer only a single semantic category. EUFCC-CIR proposed a composed retrieval dataset for cultural heritage collections \cite{net2024eufcc}, but it contains no temporal metadata, making category-date composition impossible. No existing benchmark simultaneously provides object-category and year-level annotations over a diverse photographic archive.
\section{Methodology}
\subsection{Decomposable and Transitive Temporal Representations}

Training operates on tuples $T = (x_A, x_B)$ where each image $x$ carries two 
attributes: a category $c \in \{1,\dots,C\}$ and a year $y \in \{1,\dots,Y\}$. In the tuples, $c_A = c_B = c$ and $y_A \neq y_B$. The geometric construction of our method rests on a key structural requirement for the year subspace: that temporal displacements between dates behave consistently across all categories.

\begin{definition}[Temporal Independence of Classes]
\label{def:definition_classes_assumption}
Given class and date centroids ($\mu^c$, $\mu^y$) and their corresponding labels ($c$ and $y$), a given representation is said to be temporally independent from the object classes when
\begin{equation}
p(\mu^c, c, \mu^y, y) = p(\mu^c, c) \cdot p(\mu^y, y)
\end{equation}
\end{definition}

From this definition, the pair $(\mu^c, c)$ is independent of $(\mu^y, y)$: category centroids carry no information about dates, and date centroids carry no information about categories.

\begin{definition}[Temporal Transitivity]
\label{def:temporal_trans}
For any ordered pair of dates $(A, B)$, define the \textbf{displacement vector}
$\Delta^y_{A \to B} = \mu^{y_B} - \mu^{y_A} \in \mathbb{R}^d$.
The year subspace is \textbf{temporally transitive} if, for any triple $(A, B, C)$,
\begin{equation}
\Delta^y_{A \to B} + \Delta^y_{B \to C} = \Delta^y_{A \to C}.
\end{equation}
\end{definition}

This property is necessary for zero-label inference: it guarantees that a year residual extracted from \textit{any} image acts as a consistent temporal operator when transplanted onto another image. We now introduce the proxy and residual structures that make this property learnable.

\subsection{Proxy Sets}
\label{sec:proxies}

Let $f : \mathcal{X} \to \mathbb{R}^d$ be a CNN encoder. Each image $x$ carries two attributes: a category $c \in \{1,\dots,C\}$ and a year $y \in \{1,\dots,Y\}$. We seek a representation where these attributes occupy \textit{orthogonal subspaces}, admitting the decomposition:
\begin{equation}
v = f(x) = \mu^c + \mu^y, 
\end{equation}
where $\mu^c, \mu^y \in \mathbb{R}^d$ are \textit{random vectors} acting as class centroids for object and temporal information. The ideal decomposition satisfies $I(\mu^c;\, \mu^y) = 0$: the two components are mutually uninformative, and their sum captures the full label-relevant content of $v$.

A proxy $p \in \mathcal{P}$ is a learnable vector of the same dimensionality as the embedding space, representing the centroid of a contrastive class~\cite{movshovitz2017no}. Unlike dataset-derived centroids, proxies are optimized end-to-end: they are simultaneously pulled toward the embeddings of their assigned class and pushed away from all others, converging to dynamic but class-specific attractors in $\mathbb{R}^d$. We define three proxy sets:

\paragraph{Category proxies $\mathcal{P}^c = \{\mu^c_1,\dots,\mu^c_C\}$.} One proxy per object category. Each $\mu^c$ integrates over all years of category $c$, converging to a year-agnostic centroid. These operate on \textit{unnormalized} embeddings, so both magnitude and direction contribute to proxy assignment.

\paragraph{Year proxies $\mathcal{P}^y = \{\mu^y_1,\dots,\mu^y_Y\}$.} One proxy per year, shared across all categories. These operate on $\ell_2$-normalized residuals $\hat{u} = u / \|u\|$, reducing the dot product to cosine similarity and encoding temporal information purely in the \textit{angle} of the representation — a design choice validated by our ablation study (Table~\ref{tab:ablation}).

\paragraph{Auxiliary displacement proxies $\mathcal{K}^y = \{k^y_1,\dots,k^y_Y\}$.} One learnable displacement vector per year, shared across categories. These provide learned temporal offsets used during training to enforce transitivity (Definition~\ref{def:temporal_trans}). The set $\mathcal{K}^y$ is discarded at inference.

Under the model assumptions of Definition~\ref{def:definition_classes_assumption}, the following decomposability result holds:

\begin{proposition}[Temporally Decomposable Image Representation]
\label{prop:tdir}
Under the model assumptions of Definition~\ref{def:definition_classes_assumption}:
\begin{equation}
I(\mu^c, \mu^y;\, c,y) = I(\mu^c;\, c) + I(\mu^y;\, y)
\end{equation}
\end{proposition}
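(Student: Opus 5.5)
The plan is to deduce the identity from the product factorization in Definition~\ref{def:definition_classes_assumption} using only standard entropy identities. Write $X=(\mu^c,c)$ and $Z=(\mu^y,y)$. By that definition, $p(X,Z)=p(X)\,p(Z)$, so $X$ and $Z$ are independent. In particular:
\begin{itemize}
\item $\mu^c$ is independent of $\mu^y$;
\item $c$ is independent of $y$;
\item $\mu^c$ is independent of $y$;
\item $\mu^y$ is independent of $c$.
\end{itemize}
I will then expand
\begin{equation}
I(\mu^c,\mu^y;\,c,y)=H(\mu^c,\mu^y)-H(\mu^c,\mu^y\mid c,y),
\end{equation}
with differential entropies if the centroids are continuous. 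Alternatively, I will work throughout with KL divergences so that mixed discrete/continuous cases pose no difficulty.

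\textbf{Step 1: the marginal term.} Independence of $\mu^c$ and $\mu^y$ gives $H(\mu^c,\mu^y)=H(\mu^c)+H(\mu^y)$.

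\textbf{Step 2: the conditional term.} I need the conditional law given $(c,y)$ to factor as $p(\mu^c,\mu^y\mid c,y)=p(\mu^c\mid c)\,p(\mu^y\mid y)$. This follows by dividing the joint $p(\mu^c,c)\,p(\mu^y,y)$ by the marginal $p(c,y)=p(c)\,p(y)$. Integrating the joint over $\mu^c$ and $\mu^y$ shows that the label marginal does factor in this way. Hence
\begin{equation}
H(\mu^c,\mu^y\mid c,y)=H(\mu^c\mid c,y)+H(\mu^y\mid c,y)=H(\mu^c\mid c)+H(\mu^y\mid y).
\end{equation}

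\textbf{Step 3: combine.} Subtracting the two expressions and regrouping gives
\begin{equation}
\bigl(H(\mu^c)-H(\mu^c\mid c)\bigr)+\bigl(H(\mu^y)-H(\mu^y\mid y)\bigr)=I(\mu^c;c)+I(\mu^y;y).
\end{equation}

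A cleaner equivalent route is to write the mutual information as a KL divergence,
\begin{equation}
I=\mathbb{E}\log\frac{p(\mu^c,\mu^y,c,y)}{p(\mu^c,\mu^y)\,p(c,y)}.
\end{equation}
Substituting the factorization turns the log-ratio into a sum of the two log-ratios defining $I(\mu^c;c)$ and $I(\mu^y;y)$. Each term's expectation then reduces to its own marginal. I would present this version, since it avoids entropy-existence issues.

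\textbf{Main obstacle.} The step needing care is Step~2: justifying that conditioning on the labels preserves the factorization. This depends on the joint density being exactly the product stated in Definition~\ref{def:definition_classes_assumption}, not merely on the weaker condition $I(\mu^c;\mu^y)=0$. I will therefore state explicitly that the proposition uses the full factorization, and note that the weaker condition alone would not suffice.
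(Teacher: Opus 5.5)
Your proof is correct, but it takes a different route from the paper. The paper does not go through entropies or a direct KL computation. Instead it applies the chain rule of mutual information twice,
\[
I(\mu^c,\mu^y;c,y)=I(\mu^c;c)+I(\mu^c;y\mid c)+I(\mu^y;c\mid\mu^c)+I(\mu^y;y\mid\mu^c,c),
\]
and handles the terms one at a time using consequences of Definition~\ref{def:definition_classes_assumption}. The condition $\mu^c\perp y\mid c$ kills the second term, $\mu^y\perp(\mu^c,c)$ kills the third, and the fourth is reduced to $I(\mu^y;y)$.

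Your KL version replaces this bookkeeping with a single tensorization step. Both the joint law and the reference law $p(\mu^c,\mu^y)\,p(c,y)$ factor over the blocks $(\mu^c,c)$ and $(\mu^y,y)$, so the log-ratio splits and each expectation collapses to its own marginal. This buys you two things. First, no possibly infinite or ill-defined (differential or mixed) quantities are ever subtracted. Second, the hypothesis you actually use is visibly the full factorization. That matters here: the paper justifies $I(\mu^y;y\mid\mu^c,c)=I(\mu^y;y)$ by citing $\mu^y\perp(\mu^c,c)$, which is not enough on its own. The step needs $(\mu^y,y)\perp(\mu^c,c)$, which Definition~\ref{def:definition_classes_assumption} does supply. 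In exchange, the paper's route shows which conditional independence controls each cross term. It is the same chain-rule bookkeeping the paper reuses in Proposition~\ref{prop:learnability} and Theorem~\ref{theorem:emergent_manuscript}.

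Your closing remark that $I(\mu^c;\mu^y)=0$ would not suffice is correct. Note also that the factorization is sufficient but not necessary, so the paper's ``if and only if'' overstates it. For example, take $c$ a fair bit, $\mu^c=\mu^y=c$, and $y$ an independent fair bit. Both sides equal one bit, yet the factorization fails.
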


See the proof in Appendix~\ref{sec:tfir_demonstraction}. This implies that, under the class independence assumption, a visual embedding $v = f(x)$ can be \textbf{temporally decomposable} with respect to the proxy sets $\mathcal{P}^c$ and $\mathcal{P}^y$. Because $v = \mu^y + \mu^c$, the mutual information between the joint labels and the full representation decomposes as:
\begin{equation}
I(v;\, c,y) = I(\mu^c;\, c) + I(\mu^y;\, y)
\end{equation}

Ideally, the total information about both factors contained in $v$ is fully decomposable into independent contributions: categorical information $I(\mu^c;\, c)$ and temporal information $I(\mu^y;\, y)$. In the ideal case, $\mu^c$ and $\mu^y$ are full descriptors of $v$ with respect to $(c, y)$: knowing both proxies captures everything $v$ encodes about category and year, with no label-relevant information remaining.

In practice, however, there is some statistical entanglement between the appearance of certain objects and their respective dates. We therefore account for a residual signal $\varepsilon \in \mathbb{R}^d$ present in $v$:
\begin{equation}
v = \mu^c + \mu^y + \epsilon
\end{equation}

\begin{proposition}[Practical Learnability of TDIRs]
\label{prop:learnability}
Under the TDIR framework, the discrepancy term $\delta \triangleq I(\varepsilon;\, c, y)$ depends exclusively on the statistical structure of the training data and not on the choice of proxy vectors $\mu^c$ or $\mu^y$:
\begin{equation} 
I(v;\, c,y)
\leq
I(\mu^c;\, c)
+
I(\mu^y;\, y)
+
\delta \qquad \text{where} \qquad \delta
\triangleq
I(\varepsilon;\, c,y)
\end{equation}

\end{proposition}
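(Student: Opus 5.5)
The plan is to treat $v = \mu^c + \mu^y + \varepsilon$ as a deterministic function of the triple $(\mu^c, \mu^y, \varepsilon)$ and bound $I(v;\,c,y)$ through that triple. First, by the data processing inequality (since $v$ is a function of $(\mu^c,\mu^y,\varepsilon)$, the chain $(c,y) \to (\mu^c,\mu^y,\varepsilon) \to v$ is Markov), we obtain
\begin{equation*}
I(v;\, c,y) \le I(\mu^c,\mu^y,\varepsilon;\, c,y).
\end{equation*}
Next, I apply the chain rule for mutual information to split off the residual:
\begin{equation*}
I(\mu^c,\mu^y,\varepsilon;\, c,y) = I(\mu^c,\mu^y;\, c,y) + I(\varepsilon;\, c,y \mid \mu^c,\mu^y).
\end{equation*}
The first term is handled directly by Proposition~\ref{prop:tdir}, which, under the independence assumption of Definition~\ref{def:definition_classes_assumption}, equals $I(\mu^c;\,c) + I(\mu^y;\,y)$.

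The remaining work is to show $I(\varepsilon;\, c,y \mid \mu^c,\mu^y) \le I(\varepsilon;\, c,y) = \delta$. Conditioning can increase mutual information in general, so this is the main obstacle. I would first try to model $\varepsilon$ as the part of $v$ not explained by the proxies, i.e.\ assume $\varepsilon$ is independent of $(\mu^c,\mu^y)$ both marginally and given $(c,y)$. Then
\begin{equation*}
I(\varepsilon;\, c,y,\mu^c,\mu^y) = I(\varepsilon;\, c,y),
\end{equation*}
since $(\mu^c,\mu^y)$ carries no extra information about $\varepsilon$ beyond the labels. Combined with $I(\varepsilon;\,\mu^c,\mu^y)=0$, the chain rule gives $I(\varepsilon;\,c,y\mid\mu^c,\mu^y) = I(\varepsilon;\,c,y,\mu^c,\mu^y) - I(\varepsilon;\,\mu^c,\mu^y) = \delta$. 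This yields the inequality with equality in that step. The inequality in the proposition then comes only from the data processing step.

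For the claim that $\delta$ depends only on the data, I would argue as follows. The proxies are deterministic functions of the labels: $\mu^c = \mu^c_{c}$ and $\mu^y = \mu^y_{y}$ index learnable vectors. Given the labels, the residual $\varepsilon$ is determined by $v$ up to a label-dependent shift. A shift that depends only on $(c,y)$ is a bijection conditional on the labels, so $H(\varepsilon\mid c,y)$ is invariant to the proxy choice. This gives $\delta = H(\varepsilon) - H(\varepsilon\mid c,y)$, in which only the unconditional term could move. I expect this to be the delicate part. I would either state it under the independence assumption above, or restrict to the regime where the proxies are the class means, so that $\varepsilon$ is the centered within-class entanglement fixed by the data distribution. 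In that regime $\delta$ reduces to the label dependence intrinsic to $p(x,c,y)$, which is the sense of ``depends exclusively on the statistical structure of the training data''.
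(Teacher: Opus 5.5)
Your proof of the inequality follows the paper's route and is correct. The paper also uses that $v$ is a function of $(\mu^c,\mu^y,\varepsilon)$: it writes $I(v;c,y)=I(\mu^c,\mu^y,\varepsilon;c,y)-I(\mu^c,\mu^y,\varepsilon;c,y\mid v)$ and drops the nonnegative term, which is your data-processing step. It then splits off $\varepsilon$ with the chain rule, applies Proposition~\ref{prop:tdir} to the proxy part, and uses $\varepsilon\perp(\mu^c,\mu^y)$ to turn the leftover term into $\delta$.

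Your handling of that last step is the sound version. The paper's chain-rule line has the conditioning swapped, writing $I(\varepsilon;\mu^c,\mu^y\mid c,y)$ where $I(\varepsilon;c,y\mid\mu^c,\mu^y)$ belongs. It later replaces $I(\varepsilon;\mu^c,\mu^y)$ by $I(\varepsilon;c,y)$ without comment. Your assumption of independence both marginally and given $(c,y)$ is exactly what the paper tacitly needs, and it makes $I(\varepsilon;c,y\mid\mu^c,\mu^y)=\delta$ a genuine identity.

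The caveat is your final paragraph. The paper gives no argument that $\delta$ is independent of the proxies beyond asserting it, so you are not missing a step it supplies. However, your attempted justification silently changes the model.

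If $\mu^c=\mu^c_c$ and $\mu^y=\mu^y_y$ are deterministic and distinct per class, then $(\mu^c,\mu^y)$ carries exactly the information in $(c,y)$. The marginal independence $\varepsilon\perp(\mu^c,\mu^y)$ used in your main step then forces $\delta=0$, so combining the two halves of your proposal only covers that degenerate case. In that model your ``main obstacle'' disappears anyway: $I(\varepsilon;c,y\mid\mu^c,\mu^y)=0$ with no assumption on $\varepsilon$, and the bound reduces to the triviality $I(v;c,y)\le H(c,y)\le H(c)+H(y)$.

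Your own split $\delta=H(\varepsilon)-H(\varepsilon\mid c,y)$ in fact shows the clause is false in general. Take $v\mid(c,y)\sim\mathcal{N}(m_c+m_y,\Sigma)$ with proxies $t\,m_c$ and $t\,m_y$. Then $\delta$ varies continuously from $I(v;c,y)$ at $t=0$ to $0$ at $t=1$. So present the proxy-independence of $\delta$ as an explicit modelling assumption rather than as something derived.
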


Consequently, $\delta$ constitutes a data-dependent constant with respect to the optimization, and the category and year centroids can be learned freely to satisfy the orthogonality constraint $\langle \mu^c,\, \mu^y \rangle = 0$ without affecting this error bound (see proof in Appendix~\ref{sec:tfir_demonstraction_error}). A representation $v$ becomes more temporally decomposable as $\delta$ decreases. This leaves open a potential circularity: Proposition~\ref{prop:learnability} assumes orthogonality to guarantee free learning of $\mu^c$ and $\mu^y$, yet this orthogonality has not been enforced. Theorem~\ref{theorem:emergent_manuscript} will resolve this by showing that orthogonality emerges implicitly from the proposed training strategy.

\subsection{Residual Vectors}
\label{sec:residuals}

Given the proxy sets above, we build four residual vectors from each training tuple $T = (x_A, x_B)$ with shared category $c$ and distinct years $y_A \neq y_B$. The four-step geometric construction is illustrated in Figure~\ref{fig:optimization}.

\paragraph{Direct residuals.} Subtracting the category centroid $\mu^c$ re-centers each cluster at the origin, revealing the temporal component (Figure~\ref{fig:optimization}b):
\begin{equation}
    r_A = v_A - \mu^c, \qquad r_B = v_B - \mu^c.
\end{equation}

In $r_A$ and $r_B$, the information necessary to infer the category has been removed. Consequently, even if temporal information were present in the category subspace, it is stripped from the representation and will not propagate to subsequent steps.

\paragraph{Shifted residuals (swapping trick).} To enforce temporal transitivity, we additionally form two cross-year residuals using the auxiliary displacement proxies $\mathcal{K}^y$:
\begin{equation}
r_{A \to B} = (v_A - \mu^c) + k^y_B, \qquad r_{B \to A} = (v_B - \mu^c) + k^y_A.
\end{equation}
The vector $r_{A \to B}$ displaces image $A$'s temporal residual toward year $y_B$; if the temporal manifold is truly transitive, this shifted residual should be indistinguishable from $r_B$ — and therefore close to $\mu^{y_B}$. The symmetric argument holds for $r_{B \to A}$. %
All four residuals are $\ell_2$-normalized before being passed to the year loss:
\begin{equation}
\hat{r}_* \gets r_* / \|r_*\|.
\end{equation}
This normalization encodes temporal information purely as angular structure in the shared subspace (Figure~\ref{fig:optimization}c--d).

\begin{figure}
    \centering
    \setlength{\tabcolsep}{4pt}
    \begin{tabular}{cccc}
        \parbox{0.23\textwidth}{
            \centering
            \resizebox{\linewidth}{!}{
\definecolor{c0}{RGB}{120,15,50}
\definecolor{c1}{RGB}{185,45,35}
\definecolor{c2}{RGB}{220,105,30}
\definecolor{c3}{RGB}{200,170,60}
\definecolor{c4}{RGB}{100,165,100}
\definecolor{c5}{RGB}{45,140,165}
\definecolor{c6}{RGB}{30,90,175}
\definecolor{c7}{RGB}{140,170,210}

\begin{tikzpicture}
\node[font=\bfseries\Huge, black] at (0,5) {Category Clustering};

\draw[gray!15, line width=0.30pt, step=0.5] (-5.5,-4.5) grid (5.5,4.5);
\draw[gray!30, line width=0.45pt, step=1.0] (-5.5,-4.5) grid (5.5,4.5);





\foreach \c [count=\j from 0] in {c0, c1, c2, c3, c4, c5, c6, c7} {
    \foreach \i in {1,...,3} {
        
        
        \pgfmathsetmacro{\ang}{rnd*360}
        \pgfmathsetmacro{\rad}{0.25 + rnd*0.8}
        \pgfmathsetmacro{\x}{\rad*cos(\ang)}
        \pgfmathsetmacro{\y}{\rad*sin(\ang)}
        \filldraw[\c, draw=\c!70!black, line width=0.28pt, fill opacity=0.93] 
          (\x -2.8,\y+2.4) circle (0.2);

    }
}

\newcommand{\tri}[4]{%
  \filldraw[#4, draw=#4!65!black, line width=0.28pt, fill opacity=0.93]
    (#1,{#2+#3}) -- ({#1-0.866*#3},{#2-0.5*#3}) -- ({#1+0.866*#3},{#2-0.5*#3}) -- cycle;}

\foreach \c [count=\j from 0] in {c0, c1, c2, c3, c4, c5, c6, c7} {
    \foreach \i in {1,...,4} {

        \pgfmathsetmacro{\ang}{rnd*360}
        \pgfmathsetmacro{\rad}{0.25 + rnd*0.9}
        \pgfmathsetmacro{\x}{\rad*cos(\ang)}
        \pgfmathsetmacro{\y}{\rad*sin(\ang)}
        \tri{\x + 3.14}{\y + 2.80}{0.25}{\c}

    }
}

\newcommand{\sq}[4]{%
  \filldraw[#4, draw=#4!65!black, line width=0.28pt, fill opacity=0.93]
    ({#1-#3},{#2-#3}) rectangle ({#1+#3},{#2+#3});}


\foreach \c [count=\j from 0] in {c0, c1, c2, c3, c4, c5, c6, c7} {
    \foreach \i in {1,...,5} {

        \pgfmathsetmacro{\ang}{rnd*360}
        \pgfmathsetmacro{\rad}{0.25 + rnd*0.9}
        \pgfmathsetmacro{\x}{\rad*cos(\ang)}
        \pgfmathsetmacro{\y}{\rad*sin(\ang)}
        \sq{\x + 0.09}{\y - 3.3}{0.2}{\c}
    }
}

\end{tikzpicture}}\\
            \small (a)
        }
        &
        \parbox{0.23\textwidth}{
            \centering
            \resizebox{\linewidth}{!}{
\definecolor{c0}{RGB}{120,15,50}
\definecolor{c1}{RGB}{185,45,35}
\definecolor{c2}{RGB}{220,105,30}
\definecolor{c3}{RGB}{200,170,60}
\definecolor{c4}{RGB}{100,165,100}
\definecolor{c5}{RGB}{45,140,165}
\definecolor{c6}{RGB}{30,90,175}
\definecolor{c7}{RGB}{140,170,210}

\begin{tikzpicture}
\node[font=\bfseries\Huge, black] at (0,5) {Centering};

\draw[gray!15, line width=0.30pt, step=0.5] (-5.5,-4.5) grid (5.5,4.5);
\draw[gray!30, line width=0.45pt, step=1.0] (-5.5,-4.5) grid (5.5,4.5);



\newcommand{\arrowScale}{0.8} 
\newcommand{\ScaledArrow}[3]{%
  \draw[-{Stealth[length=6pt,width=4pt]},
        gray, line width=0.9pt]
     ({#1*\arrowScale},{#2*\arrowScale}) -- (0, 0)
    node[pos=-0.2, sloped, above, font=\tiny\itshape, text=black]
    {\textbf{\Huge{$#3$}}};
}

\ScaledArrow{-3.0}{2.5}{\textbf{- $\text{Proxy}_\text{A}$}}
\ScaledArrow{3.0}{2.5}{\textbf{- $\text{Proxy}_\text{B}$}}
\ScaledArrow{0.2}{-3.2}{\textbf{- $\text{Proxy}_\text{C}$}}

\newcommand{\tri}[4]{%
  \filldraw[#4, draw=#4!65!black, line width=0.28pt, fill opacity=0.93]
    (#1,{#2+#3}) -- ({#1-0.866*#3},{#2-0.5*#3}) -- ({#1+0.866*#3},{#2-0.5*#3}) -- cycle;}

\newcommand{\sq}[4]{%
  \filldraw[#4, draw=#4!65!black, line width=0.28pt, fill opacity=0.93]
    ({#1-#3},{#2-#3}) rectangle ({#1+#3},{#2+#3});}

\foreach \c [count=\j from 0] in {c0, c1, c2, c3, c4, c5, c6, c7} {
    \foreach \i in {1,...,3} {

        \pgfmathsetmacro{\ang}{rnd*360}
        \pgfmathsetmacro{\rad}{sqrt(rnd)*1.3} 
        \pgfmathsetmacro{\x}{\rad*cos(\ang)}
        \pgfmathsetmacro{\y}{\rad*sin(\ang)}
        \sq{\x}{\y }{0.2}{\c}

        \pgfmathsetmacro{\ang}{rnd*360}
        \pgfmathsetmacro{\rad}{sqrt(rnd)*1.3} 
        \pgfmathsetmacro{\x}{\rad*cos(\ang)}
        \pgfmathsetmacro{\y}{\rad*sin(\ang)}
        \tri{\x }{\y }{0.3}{\c}

        \pgfmathsetmacro{\ang}{rnd*360}
        \pgfmathsetmacro{\rad}{sqrt(rnd)*1.3} 
        \pgfmathsetmacro{\x}{\rad*cos(\ang)}
        \pgfmathsetmacro{\y}{\rad*sin(\ang)}
        \filldraw[\c, draw=\c!70!black, line width=0.28pt, fill opacity=0.93] 
          (\x ,\y) circle (0.2);
    }
}

\end{tikzpicture}}\\
            \small (b)
        }
        &
        \parbox{0.23\textwidth}{
            \centering
            \resizebox{\linewidth}{!}{
\definecolor{c0}{RGB}{120,15,50}
\definecolor{c1}{RGB}{185,45,35}
\definecolor{c2}{RGB}{220,105,30}
\definecolor{c3}{RGB}{200,170,60}
\definecolor{c4}{RGB}{100,165,100}
\definecolor{c5}{RGB}{45,140,165}
\definecolor{c6}{RGB}{30,90,175}
\definecolor{c7}{RGB}{140,170,210}

\begin{tikzpicture}
\node[font=\bfseries\Huge, black] at (0,5)  {Date Proxy Optimization};
\draw[gray!15, line width=0.30pt, step=0.5] (-5.5,-4.5) grid (5.5,4.5);
\draw[gray!30, line width=0.45pt, step=1.0] (-5.5,-4.5) grid (5.5,4.5);

\def\arcRad{3}

\draw[-{Stealth[length=7pt,width=5pt]}, line width=1.2pt, gray!80] 
  (165:\arcRad) arc (165:45:\arcRad);

\draw[
    ultra thick,
    -{Stealth[length=16pt,width=12pt]},
    decorate
] (165:\arcRad) arc (165:45:\arcRad);

\newcommand{\tri}[4]{%
  \filldraw[#4, draw=#4!65!black, line width=0.28pt, fill opacity=0.93]
    (#1,{#2+#3}) -- ({#1-0.866*#3},{#2-0.5*#3}) -- ({#1+0.866*#3},{#2-0.5*#3}) -- cycle;}

\newcommand{\sq}[4]{%
  \filldraw[#4, draw=#4!65!black, line width=0.28pt, fill opacity=0.93]
    ({#1-#3},{#2-#3}) rectangle ({#1+#3},{#2+#3});}

\pgfmathsetseed{42}

\foreach \c [count=\j from 0] in {c0, c1, c2, c3, c4, c5, c6, c7} {
    \foreach \i in {1,...,7} {
        
        
        \pgfmathsetmacro{\ang}{135 - \j*45 + rand*22}
        \pgfmathsetmacro{\rad}{0.25 + rnd*1.3}
        \pgfmathsetmacro{\x}{\rad*cos(\ang)}
        \pgfmathsetmacro{\y}{\rad*sin(\ang)}
        \filldraw[\c, draw=\c!70!black, line width=0.28pt, fill opacity=0.93] 
          (\x,\y) circle (0.135);
        
        \pgfmathsetmacro{\ang}{135 - \j*45 + rand*22}
        \pgfmathsetmacro{\rad}{0.25 + rnd*1.3}
        \pgfmathsetmacro{\x}{\rad*cos(\ang)}
        \pgfmathsetmacro{\y}{\rad*sin(\ang)}
        \tri{\x}{\y}{0.132}{\c}
        
        \pgfmathsetmacro{\ang}{135 - \j*45 + rand*22}
        \pgfmathsetmacro{\rad}{0.25 + rnd*1.3}
        \pgfmathsetmacro{\x}{\rad*cos(\ang)}
        \pgfmathsetmacro{\y}{\rad*sin(\ang)}
        \sq{\x}{\y}{0.118}{\c}
    }
}

\end{tikzpicture}}\\
            \small (c)
        }
        &
        \parbox{0.23\textwidth}{
            \centering
            \resizebox{\linewidth}{!}{
\definecolor{c0}{RGB}{120,15,50}
\definecolor{c1}{RGB}{185,45,35}
\definecolor{c2}{RGB}{220,105,30}
\definecolor{c3}{RGB}{200,170,60}
\definecolor{c4}{RGB}{100,165,100}
\definecolor{c5}{RGB}{45,140,165}
\definecolor{c6}{RGB}{30,90,175}
\definecolor{c7}{RGB}{140,170,210}

\begin{tikzpicture}
\node[font=\bfseries\Huge, black] at (0,5) {Category Translation};

\draw[gray!15, line width=0.30pt, step=0.5] (-5.5,-4.5) grid (5.5,4.5);
\draw[gray!30, line width=0.45pt, step=1.0] (-5.5,-4.5) grid (5.5,4.5);



\newcommand{\arrowScale}{0.8} 
\newcommand{\ScaledArrow}[3]{%
  \draw[-{Stealth[length=6pt,width=4pt]},
        gray, line width=0.9pt]
    (0,0) -- ({#1*\arrowScale},{#2*\arrowScale})
    node[pos=0.5, sloped, above, font=\tiny\itshape, text=black]
    {\textbf{\Huge{$#3$}}};
}

\ScaledArrow{-3.0}{2.5}{+\text{P}_A}
\ScaledArrow{3.0}{2.5}{+\text{P}_B}
\ScaledArrow{0.2}{-3.2}{+\text{P}_C}

\foreach \c [count=\j from 0] in {c0, c1, c2, c3, c4, c5, c6, c7} {
    \foreach \i in {1,...,4.5} {
        
        
        \pgfmathsetmacro{\ang}{135 - \j*45 + rand*22}
        \pgfmathsetmacro{\rad}{0.25 + rnd*1}
        \pgfmathsetmacro{\x}{\rad*cos(\ang)}
        \pgfmathsetmacro{\y}{\rad*sin(\ang)}
        \filldraw[\c, draw=\c!70!black, line width=0.28pt, fill opacity=0.93] 
          (\x -2.8,\y+2.4) circle (0.2);

    }
}

\newcommand{\tri}[4]{%
  \filldraw[#4, draw=#4!65!black, line width=0.28pt, fill opacity=0.93]
    (#1,{#2+#3}) -- ({#1-0.866*#3},{#2-0.5*#3}) -- ({#1+0.866*#3},{#2-0.5*#3}) -- cycle;}

\foreach \c [count=\j from 0] in {c0, c1, c2, c3, c4, c5, c6, c7} {
    \foreach \i in {1,...,4} {

        \pgfmathsetmacro{\ang}{135 - \j*45 + rand*22}
        \pgfmathsetmacro{\rad}{0.25 + rnd*0.9}
        \pgfmathsetmacro{\x}{\rad*cos(\ang)}
        \pgfmathsetmacro{\y}{\rad*sin(\ang)}
        \tri{\x + 3.14}{\y + 2.80}{0.3}{\c}

    }
}

\newcommand{\sq}[4]{%
  \filldraw[#4, draw=#4!65!black, line width=0.28pt, fill opacity=0.93]
    ({#1-#3},{#2-#3}) rectangle ({#1+#3},{#2+#3});}


\foreach \c [count=\j from 0] in {c0, c1, c2, c3, c4, c5, c6, c7} {
    \foreach \i in {1,...,5} {

        \pgfmathsetmacro{\ang}{135 - \j*45 + rand*22}
        \pgfmathsetmacro{\rad}{0.25 + rnd*0.9}
        \pgfmathsetmacro{\x}{\rad*cos(\ang)}
        \pgfmathsetmacro{\y}{\rad*sin(\ang)}
        \sq{\x + 0.09}{\y - 3.4}{0.2}{\c}
    }
}

\end{tikzpicture}}\\
            \small (d)
        }
    \end{tabular}
    \vspace{5pt}
    \caption{Training pipeline. (a) Proxy loss forms year-agnostic category clusters.
    (b) Subtracting $\mu^c$ centers each cluster into a shared temporal subspace.
    (c) Year proxies are optimized on normalized residuals via cosine distance.
    (d) The swapping trick translates residuals across years, enforcing temporal transitivity.}
    \label{fig:optimization}
\end{figure}

\subsection{Proxy Losses and Joint Objective}
\label{sec:losses}

\paragraph{Proxy loss.} All objectives share a single form~\cite{movshovitz2017no}. For an embedding $u$ with ground-truth proxy $\mu^* \in \mathcal{P}$, the proxy loss is:
\begin{equation}
    \mathcal{L}(u,\, \mu^*;\, \mathcal{P}) = -\log \frac{\exp(u^\top \mu^*)}
{\displaystyle\sum_{p \in \mathcal{P}} \exp(u^\top p)}.
\end{equation}

This is a softmax cross-entropy over proxy assignments: it maximizes the score of the correct proxy $\mu^*$ relative to all others in $\mathcal{P}$, pulling $u$ into the neighborhood of $\mu^*$ while repelling it from every competing proxy.

\paragraph{Category loss.} The category subspace is optimized directly on the full (unnormalized) embeddings:
\begin{equation}
\label{eq:catloss}
\mathcal{L}^c(T) = \mathcal{L}(v_A,\, \mu^c;\, \mathcal{P}^c) \;+\; \mathcal{L}(v_B,\, \mu^c;\, \mathcal{P}^c).
\end{equation}
Each proxy $\mu^c$ integrates over all years of category $c$ (Figure~\ref{fig:optimization}a). At this stage, date information may still be present in the embedding through spurious correlations such as textures or color; the subsequent residual construction removes it.

\paragraph{Direct temporal loss.} The year subspace is optimized on the normalized direct residuals:
\begin{equation}
\label{eq:direct_year_loss}
\mathcal{L}^y_\text{direct}(T) = \mathcal{L}(\hat{r}_A,\, \mu^{y_A};\, \mathcal{P}^y) \;+\;
\mathcal{L}(\hat{r}_B,\, \mu^{y_B};\, \mathcal{P}^y).
\end{equation}
This makes the year subspace \textit{estimable}, but not yet geometrically consistent across categories.

\paragraph{Transitive temporal loss.} To enforce temporal transitivity (Definition~\ref{def:temporal_trans}), the shifted residuals are pulled toward their \textit{target} year proxies — the year each residual has been displaced toward, rather than the image's own year:
\begin{equation}
\mathcal{L}^y_\text{trans}(T) =
\mathcal{L}(\hat{r}_{A \to B},\, \mu^{y_B};\, \mathcal{P}^y)
\;+\;
\mathcal{L}(\hat{r}_{B \to A},\, \mu^{y_A};\, \mathcal{P}^y).
\end{equation}
This cannot be satisfied unless the displacement $k^y$ is consistent across all categories — that is, unless the year residuals of any two images from any two categories differ by the same offset in the temporal subspace. This \textit{swapping trick} directly enforces Definition~\ref{def:temporal_trans}.

\paragraph{Total year loss and joint objective.} Combining direct and transitive terms (Figure~\ref{fig:optimization}d):
\begin{equation}
\mathcal{L}^y(T) =
\underbrace{\mathcal{L}(\hat{r}_A,\, \mu^{y_A};\, \mathcal{P}^y)
+ \mathcal{L}(\hat{r}_B,\, \mu^{y_B};\, \mathcal{P}^y)}_{\text{direct: } r \approx \mu^y}
+
\underbrace{\mathcal{L}(\hat{r}_{A \to B},\, \mu^{y_B};\, \mathcal{P}^y)
+ \mathcal{L}(\hat{r}_{B \to A},\, \mu^{y_A};\, \mathcal{P}^y)}_{\text{transitive: } r + k^y \approx \mu^y}.
\label{eq:lyear}
\end{equation}

The total minimized loss is:
\begin{equation}
\label{eq:total_loss}
    \mathcal{L} = \mathcal{L}^c(T) + \mathcal{L}^y(T).
\end{equation}

\begin{theorem}[Emergent TDIR]
\label{theorem:emergent_manuscript}
Let $\mathcal{L}^c$ and $\mathcal{L}^y$ be the proxy loss functions defined over an embedding space $\mathbb{R}^d$. Then:

\textbf{Part I (exact case).} If $I(\epsilon\,;\,c,y) = 0$, joint minimization of 
$\mathcal{L} = \mathcal{L}^c + \mathcal{L}^y$ implies TDIR:
\begin{equation}
    \min_\theta\,\mathcal{L}^c + \mathcal{L}^y \implies 
    p(\mu^c, \mu^y, c, y) = p(\mu^c, c)\cdot p(\mu^y, y)
\end{equation}

\textbf{Part II (asymptotic case).} If $I(\epsilon\,;\,c,y) \neq 0$, joint 
minimization implies asymptotic TDIR:
\begin{equation}
    \min_\theta\,\mathcal{L}^c + \mathcal{L}^y \implies 
    \langle\mu^c,\,\mu^y\rangle \to O\!\left(\frac{I(\epsilon\,;\,c,y)}{\|\mu^y\|^2}
    \right) \quad \forall\,c,\,y
\end{equation}
\end{theorem}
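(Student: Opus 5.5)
The plan is to analyse the stationary points of the joint objective $\mathcal{L}=\mathcal{L}^c+\mathcal{L}^y$ on the decomposition $v=\mu^c+\mu^y+\epsilon$. The guiding idea is that the year loss, evaluated on the residual $r=v-\mu^c$, makes $\hat r$ depend only on $y$, while the category loss on $v$ makes the score $v^\top\mu^c$ depend only on $c$. Any correlation between the two components has to show up as a cross term $\langle\mu^c,\mu^y\rangle$ in one of these scores. I would first write the minimiser conditions explicitly. For the category loss (Eq.~\ref{eq:catloss}), at the optimum the softmax margins
\begin{equation}
v^\top\mu^c - v^\top\mu^{c'} = \|\mu^c\|^2-\langle\mu^{c'},\mu^c\rangle + \langle\mu^y,\mu^c-\mu^{c'}\rangle + \langle\epsilon,\mu^c-\mu^{c'}\rangle
\end{equation}
must be maximal and equal across all years $y$. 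This holds because the tuples pair identical $c$ with distinct $y_A\neq y_B$, so the loss is symmetric in the year index. Likewise, for $\mathcal{L}^y$ (Eq.~\ref{eq:lyear}), the direct and transitive terms force $\widehat{\mu^y+\epsilon}\approx\widehat{\mu^y+\epsilon+k^{y'}-k^{y}}$. By Definition~\ref{def:temporal_trans}, the displacement $k^{y'}-k^y$ therefore coincides with $\mu^{y'}-\mu^y$ for every category, because $k$ is shared across classes.

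For Part I, I set $I(\epsilon;c,y)=0$, so $\epsilon$ is label-independent noise and contributes only a constant to the expected margins. The year-invariance of the category margins then forces $\langle\mu^{y}-\mu^{y'},\mu^c-\mu^{c'}\rangle=0$ for all pairs. Symmetrically, the transitive loss forces the year displacements to be identical for every $c$, so $\mu^y$ cannot depend on $c$. Together these give that the span of centred category proxies is orthogonal to the span of centred year proxies. The pair $(\mu^c,c)$ is then a function of $c$ alone and $(\mu^y,y)$ of $y$ alone. Given the factorised sampling of tuples, this yields the factorisation of Definition~\ref{def:definition_classes_assumption}. Invoking Proposition~\ref{prop:tdir} then gives the TDIR decomposition of mutual information.

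For Part II, I would treat $\epsilon$ as a perturbation. Starting again from year-invariance of the category margin, the cross term now satisfies
\begin{equation}
|\langle\mu^y,\mu^c\rangle| \lesssim \sup_{c,y}|\mathbb{E}[\langle\epsilon,\mu^c\rangle\mid c,y]|,
\end{equation}
since only $\epsilon$ can compensate for a nonzero overlap. The task is to bound this label-dependent part of $\epsilon$ by $I(\epsilon;c,y)$. I would use Pinsker or a Donsker--Varadhan-type inequality to bound the deviation of the conditional mean from the marginal mean of $\epsilon$, with a linear proxy of order $I(\epsilon;c,y)$ in the small-information regime. Normalisation then enters: year proxies act on $\ell_2$-normalised residuals, so cosine alignment with $\mu^y$ rescales the cross term by $\|\mu^y\|^{-2}$. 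This produces $\langle\mu^c,\mu^y\rangle=O(I(\epsilon;c,y)/\|\mu^y\|^2)$, and Proposition~\ref{prop:learnability} guarantees that $\delta$ is unaffected by the proxy choice, so this remains the only residual.

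I expect the main obstacle to be making ``minimisation'' rigorous. Proxy softmax losses have no finite minimiser, because proxy norms diverge. The argument therefore has to be phrased about directions and margins along a minimising sequence rather than at an attained optimum. I would also need to justify the linear dependence on mutual information, since Pinsker naturally gives a square-root dependence. My first approach would be to work with normalised proxies and a fixed temperature, and to state the bound only to leading order in $I(\epsilon;c,y)$.
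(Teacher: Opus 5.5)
\textbf{Part I.} Your Part I follows the paper's skeleton: year-uniform category scores from $\mathcal{L}^c$, then category-independent residuals from the shared $k^y$. The steps that carry the weight, however, do not go through.

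First, Definition~\ref{def:temporal_trans} is a telescoping identity, $(\mu^{y_B}-\mu^{y_A})+(\mu^{y_C}-\mu^{y_B})=\mu^{y_C}-\mu^{y_A}$, true for any vectors. It therefore cannot force $k^{y'}-k^{y}=\mu^{y'}-\mu^{y}$. Moreover, the swap adds $k^y_B$ alone to $r_A$, so no difference $k^{y'}-k^{y}$ ever appears in the loss. The paper instead writes $r^c_A+k^y_B\approx\mu^{y_B}$ and $r^{c'}_A+k^y_B\approx\mu^{y_B}$ for two categories and subtracts. Even that binds only if $\|k^y_B\|$ cannot dominate $\|r_A\|$. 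Because $\hat r_{A\to B}$ is normalized after $k^y_B$ is added, letting $\|k^y_B\|/\|r_A\|\to\infty$ along the best unit direction drives the transitive terms to their optimal value whatever the residuals are. Normalizing the proxies, as you propose, does not rule this out.

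More decisively, your closing step is not an implication from minimization. Marginalizing the target identity over $(\mu^c,\mu^y)$ gives $p(c,y)=p(c)\,p(y)$, a property of the data that no $\theta$ can create. You import it as ``factorised sampling of tuples'', but neither the statement nor the tuple construction (shared $c$, distinct years) provides it. Once it is assumed, with $\mu^c$ a function of $c$ and $\mu^y$ a function of $y$, the factorization holds for every $\theta$, so your orthogonality result is never used. The same hypothesis is silently needed in your year-invariance step. Exchanging $A$ and $B$ in a tuple does not equalize margins across years, and with correlated labels a cross-entropy optimum can profitably encode $p(c\mid y)$ through $\langle\mu^y,\mu^c-\mu^{c'}\rangle$. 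The paper closes differently: it uses $I(\mu^c;y)=I(\mu^y;c)=0$, the chain rule, and an asserted equivalence between additivity of mutual information and the factorization. By the marginalization remark, though, label independence is needed on any route and should be stated as a hypothesis.

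\textbf{Part II.} Here you depart from the paper, and your route cannot deliver the stated rate. Year-invariance of the category margins constrains only the doubly centred quantity $\langle\mu^y-\mu^{y'},\mu^c-\mu^{c'}\rangle$, since the softmax is shift-invariant. Your bound on the uncentred $\langle\mu^y,\mu^c\rangle$ therefore does not follow: a component shared by all $\mu^c$, or by all $\mu^y$, is invisible to those margins.

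More seriously, your compensation identity makes that doubly centred term equal, up to sign, to the year-to-year change in $\mathbb{E}[\langle\epsilon,\mu^c-\mu^{c'}\rangle\mid c,y]$, and this scales like $\sqrt{I}$, not $I$. Take $\epsilon\mid(c,y)\sim\mathcal{N}(\eta\,m_{c,y},\sigma^2\mathrm{Id})$ and $\bar m=\mathbb{E}\,m_{c,y}$. Then $I(\epsilon;c,y)\le\frac{\eta^2}{2\sigma^2}\mathbb{E}\|m_{c,y}-\bar m\|^2$, while $\mathbb{E}[\epsilon\mid c,y]=\eta\,m_{c,y}$ changes by order $\eta$ across years. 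So this is not a matter of sharpening Pinsker: linear dependence on $I$ is unattainable along this route, and ``leading order'' only gives $\sqrt{I}$.

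The factor $\|\mu^y\|^{-2}$ is also unaccounted for. Your inequality comes from $\mathcal{L}^c$ acting on the unnormalized $v$, where the normalization of the year residuals never enters. The paper gets that factor through a different mechanism. It replaces the proxy losses by squared-error surrogates near the optimum and writes $\mu^c=\tilde\mu^c+\alpha\mu^{y_A}+\beta\mu^{y_B}$. The projection of $\mu^c$ onto the year proxies then enters the residuals $r_A,r_B$ directly and is penalized by the quadratic form $Q(\alpha,\beta)$. Minimizing gives $\alpha^*=\langle\mu^{y_A},\epsilon^c_A+\epsilon^c_B\rangle/(2\|\mu^{y_A}\|^2)$. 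That residual-based projection argument is the idea your Part II is missing. Be aware, though, that the paper's subsequent passage from $\alpha^*$ to $I(\epsilon;c,y)$ rests on a heuristic covariance relation that itself carries $\sqrt{2I}$. Neither route therefore rigorously yields $O(I(\epsilon;c,y)/\|\mu^y\|^2)$.
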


See proof in Appendix~\ref{sec:theorem_emergent}. Note that Proposition~\ref{prop:learnability} states that $\mu^y$ and $\mu^c$ can be learned under the assumption that temporal and categorical centroids are orthogonal. Although this might seem a strong constraint, Theorem~\ref{theorem:emergent_manuscript} shows that the joint optimization of Eq.~\eqref{eq:total_loss} imposes this orthogonality as an \textit{emergent} property of the centering and swapping tricks. It is therefore formally guaranteed that Algorithm~\ref{alg:training} can yield TDIR in both the ideal and practical case, through an implicit orthogonality regularization arising from the swapping of temporal variables across common object categories.

\begin{algorithm}[t]
\caption{TDIR Training Loop}
\label{alg:training}
\begin{algorithmic}[1]
\setlength{\itemsep}{2pt}
\State \textbf{Initialize:} $\mathcal{P}^c \in \mathbb{R}^{C \times d}$, $\mathcal{P}^y \in \mathbb{R}^{Y \times d}$, $\mathcal{K} \in \mathbb{R}^{Y \times d}$

\For{epoch $e = 1$ \ldots $E$}
    \For{$(x_A, x_B, c) \in \mathcal{D}_{\text{train}}$}

    \State \fboxsep=3pt\colorbox{blue!5}{\parbox{\dimexpr\linewidth-4em}{$v_A \gets f(x_A)$; \quad $v_B \gets f(x_B)$ \hfill \small\textit{forward pass}}}

    \State \fboxsep=3pt\colorbox{red!5}{\parbox{\dimexpr\linewidth-4em}{$\mathcal{L}^c \gets \mathcal{L}(v_A, c; \mathcal{P}^c) + \mathcal{L}(v_B, c; \mathcal{P}^c)$ \hfill \small\textit{category subspace}}}

    \State \fboxsep=3pt\colorbox{green!5}{\parbox{\dimexpr\linewidth-4em}{$r_A = v_A - \mu^c_c$; \quad $r_B = v_B - \mu^c_c$ \hfill \small\textit{center by category (decomposable)}}}

    \State \fboxsep=3pt\colorbox{green!5}{\parbox{\dimexpr\linewidth-4em}{$r_{A \to B} = v_A + k_{B} - \mu^c_c$; \quad $r_{B \to A} = v_B + k_A - \mu^c_c$ \hfill \small\textit{swapping trick (transitive)}}}

    \State \fboxsep=3pt\colorbox{orange!5}{\parbox{\dimexpr\linewidth-4em}{$\hat{r}_* \gets r_* / \|r_*\|$ \hfill \small\textit{normalize for angular encoding}}}

\State \fboxsep=3pt\colorbox{red!5}{\parbox{\dimexpr\linewidth-4em}{%
    $\mathcal{L}^y = \mathcal{L}(\hat{r}_A, \mu^{y_A}; \mathcal{P}^y) + \mathcal{L}(\hat{r}_B, \mu^{y_B}; \mathcal{P}^y)\hspace*{0.2em} +$
    \hfill \small\textit{estimable temporal embedding}\\[0.8em]
    \hspace*{0.2em}\;\;
    $+\hspace*{0.2em} \mathcal{L}(\hat{r}_{A \to B}, \mu^{y_B}; \mathcal{P}^y) + \mathcal{L}(\hat{r}_{B \to A}, \mu^{y_A}; \mathcal{P}^y)$
    \hfill \small\textit{transitive temporal embedding}%
}}

\State \fboxsep=3pt\colorbox{blue!5}{\parbox{\dimexpr\linewidth-4em}{%
$\{f,\mathcal{P}^c,\mathcal{P}^y,\mathcal{K}\}
\leftarrow
\{f,\mathcal{P}^c,\mathcal{P}^y,\mathcal{K}\}
- \alpha \nabla(\mathcal{L}^c + \mathcal{L}^y)$
\hfill \small\textit{backward pass}%
}}

    \EndFor
\EndFor
\end{algorithmic}
\end{algorithm}

\subsection{Inference and Composed Historical Image Retrieval}
\label{sec:inference}

At inference $\mathcal{K}^y$ is discarded. The three retrieval modes (Figure~\ref{fig:inference}) 
follow directly from the decomposition $f(x) = v \approx \mu^c + \mu^y$.

\paragraph{Label-based (Figure~\ref{fig:inference}a).}

At inference, we construct a database of test embeddings $\{f(x_i)\}_{i=1}^N$. As in any common embedding-based retrieval, we 
construct a query vector $q \in \mathbb{R}^d$ and returning its nearest neighbors:
\begin{equation}
    \{x_{(1)}, x_{(2)}, \dots, x_{(k)}\} = \underset{x_i}{\text{top-}k}\;
\frac{f(x_i)^\top q}{\|f(x_i)\|\|q\|}.
\end{equation}

By the decomposability property, a query targeting category $c$ at year $y$ is simply the sum 
of their respective proxies:
\begin{equation}
q = \mu^c + \mu^y.
\end{equation}

Since $\langle \mu^c, \mu^y \rangle = 0$, the query lies precisely at the intersection of both 
subspaces, and the retrieved images $\{x_{(i)}\}$ are those whose embeddings are simultaneously 
close to $\mu^c$ \textit{and} $\mu^y$ — that is, images of category $c$ from year $y$.
\paragraph{Image + label (Figure~\ref{fig:inference}b).}

When the target year $y$ is known but no category label is provided, we replace $\mu^c$ with the 
actual image embedding:
\begin{equation}
    q = f(x_c) + \mu^y.
\end{equation}

This is strictly more expressive than the label-based mode: rather than retrieving images close 
to the category centroid $\mu^c$, the query anchors to the instance-specific content of $x_c$ — 
including the $\varepsilon$ residual — while the year direction is fully determined by $\mu^y$. 
Retrieved images thus share the particular visual characteristics of $x_c$, translated to year $y$. 

\paragraph{Image + image, zero-label (Figure~\ref{fig:inference}c)}

The most powerful inference mode requires no label information whatsoever. The user provides two 
images: a \textit{category image} $x_c$ — whose year is entirely unknown and irrelevant — and a 
\textit{temporal reference} $x_y$, from which we wish to borrow the year.

\textbf{Step 1: Infer the category of $x_y$.} Since no category label is provided, we assign 
$x_y$ to its nearest category proxy:
\begin{equation}
    \hat{c} = \underset{c'}{\arg\min}\;\|f(x_y) - \mu^{c'}\|_2.
\end{equation}

\textbf{Step 2: Extract the temporal information} Subtracting the inferred category proxy strips 
the category information from $f(x_y)$, leaving a category-agnostic vector:
\begin{equation}
r_y = f(x_y) - \mu^{\hat{c}} \;\approx\; \mu^{y} + \varepsilon,
\end{equation}

where $\mu^y$ is the (unknown to the user) year proxy and $\varepsilon$ is the instance-specific 
residual. Crucially, neither $\mu^y$ nor the year of $x_y$ need to be known — the vector $r_y$ 
\textit{\textbf{is}} the temporal information.

\textbf{Step 3: Temporal transplant.} We add $r_y$ to the category embedding of $x_c$:
\begin{equation}
    q = f(x_c) + r_y.
\end{equation}

By temporal transitivity (Definition~\ref{def:temporal_trans}), this displaces $f(x_c)$ along the temporal manifold 
toward the year of $x_y$, while preserving its category direction. The $k$-nearest neighbors of 
$q$ are thus images of the same category as $x_c$, at the same period as $x_y$.
\begin{figure}
    \centering
    \setlength{\tabcolsep}{4pt}
    \begin{tabular}{ccc}
        \parbox{0.32\textwidth}{
            \centering
            \resizebox{\linewidth}{!}{\input{images/inference/inference_1}}\\
            \small (a)
        }
        &
        \parbox{0.32\textwidth}{
            \centering
            \resizebox{\linewidth}{!}{\input{images/inference/inference_2}}\\
            \small (b)
        }
        &
        \parbox{0.32\textwidth}{
            \centering
            \resizebox{\linewidth}{!}{\input{images/inference/inference_3}}\\
            \small (c)
        }
    \end{tabular}
    \vspace{5pt}
    \caption{Inference modes. (a) Label-based retrieval via proxy addition. 
    (b) Image-guided retrieval with a target year label. 
    (c) Zero-label retrieval: the year residual of $x_y$ is extracted and transplanted onto $f(x_c)$.}
    \label{fig:inference}
\end{figure}

\section{Experimental Set-Up}
\subsection{Dataset}
For the correct application and benchmarking of our proposed problem set-up, it is for us required to utilize a dataset where both the date and object annotations are available. For doing so, we take advantage of the Date Estimation in The Wild dataset (DEW) \cite{muller2017picture} with object-specific detections by using the same criteria as in \cite{net2024transformer}. In this Section, we detail the dataset and how it differs from the original DEW images.

\textbf{DEW Dataset}
The Date Estimation in The Wild dataset contains 1M natural-scene photographs from 1930 to 1999, the date information has an excellent granularity of a year per photo. However, the images are natural scenes and, therefore, contain a variety of different objects which hinders the applicability of object-specific representations.

\textbf{Object-Centric DEW Dataset}
In \cite{net2024transformer}, the authors propose a detection pipeline to crop objects detected with a minimum 10000px resolution by using the DETR \cite{carion2020end} model. Although the object-level detection (bounding boxes) are publicly available, we note that the authors use a constrained set of 6 categories, which in our case could trivialize the object-sensitive subspace.

\textbf{Proposed Dataset}
To solve the aforementioned issue, we use the same resolution criteria, but utilize OWLv2 \cite{minderer2023scaling} to detect objects from all possible categories in the DETR \texttt{HuggingFace} implementation \cite{huggingface_detr_docs} with a 30\% detection threshold\footnote{This is a standard threshold when using OWL \cite{huggingface_owlvit_docs}.}. This leads us to 7,239,083 object detections of medium-to-high resolution of 586 objects and date-specific annotations from DEW (see Table~\ref{tab:proposed_dataset}). In this article we will focus on the sub-set of top-50 most predominant objects and 4,478,887 detections (see Appendix \ref{sec:objectAnalysis}). But the complete set of detections, which can be utilized to expand the presented method or for many other applications, is available to download\footnote{Contact \href{mailto:amolina@cvc.uab.cat}{amolina@cvc.uab.cat} with the string ``\texttt{[TDIR] - Access to the database}'' as subject.}. The test partition follows the same image selection as the original DEW dataset, from which we separate the detections according to the image source.

\begin{table}[t]
    \centering
    \small
    \setlength{\tabcolsep}{6pt}
    \renewcommand{\arraystretch}{1.6}

    \begin{tabular}{lccccccc}
        \toprule
         & \textbf{1930s} & \textbf{1940s} & \textbf{1950s} & \textbf{1960s} & \textbf{1970s} & \textbf{1980s} & \textbf{1990s} \\
        \midrule
        
        Jacket &
        \includegraphics[width=1cm]{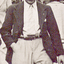} &
        \includegraphics[width=1cm]{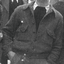} &
        \includegraphics[width=1cm]{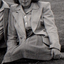} &
        \includegraphics[width=1cm]{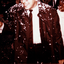} &
        \includegraphics[width=1cm]{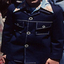} &
        \includegraphics[width=1cm]{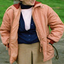} &
        \includegraphics[width=1cm]{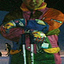} \\
        
        Poster &
        \includegraphics[width=1cm]{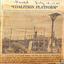} &
        \includegraphics[width=1cm]{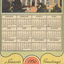} &
        \includegraphics[width=1cm]{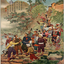} &
        \includegraphics[width=1cm]{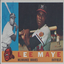} &
        \includegraphics[width=1cm]{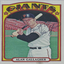} &
        \includegraphics[width=1cm]{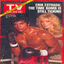} &
        \includegraphics[width=1cm]{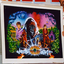} \\
        
        Car &
        \includegraphics[width=1cm]{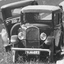} &
        \includegraphics[width=1cm]{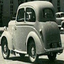} &
        \includegraphics[width=1cm]{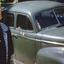} &
        \includegraphics[width=1cm]{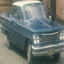} &
        \includegraphics[width=1cm]{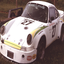} &
        \includegraphics[width=1cm]{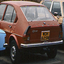} &
        \includegraphics[width=1cm]{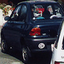} \\

        Building &
        \includegraphics[width=1cm]{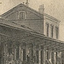} &
        \includegraphics[width=1cm]{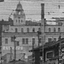} &
        \includegraphics[width=1cm]{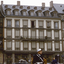} &
        \includegraphics[width=1cm]{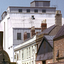} &
        \includegraphics[width=1cm]{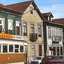} &
        \includegraphics[width=1cm]{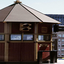} &
        \includegraphics[width=1cm]{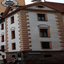} \\
        
        \bottomrule
    \end{tabular}

    \caption{Example images and categories from the dataset.}
    \label{tab:proposed_dataset}
\vspace{-2em}
\end{table}

\subsection{Implementation Details}

The practical implementation of the Temporally Decomposable Image Representations has been evaluated using several backbone architectures in order to assess the robustness of the proposed methodology across both convolutional and transformer-based visual representations. In particular, experiments have been conducted with \textbf{ConvNeXt-Base} \cite{liu2022convnet}, \textbf{Vision Transformer B/32 (ViT-B/32)} \cite{dosovitskiy2020image}, \textbf{ResNet50} \cite{he2016deep}, and \textbf{VGG19-BN} \cite{simonyan2014very}, all initialized with \textit{ImageNet} pre-trained weights \cite{deng2009imagenet} using the official \textit{PyTorch} implementations \cite{torchvision_convnext_base}. For all architectures, the original classification head was removed and replaced with a projection layer producing a fixed embedding dimension of 1024. This unified embedding size ensures a fair comparison between architectures and allows all models to operate under the same metric-learning framework. Unlike previous approaches proposed for solving the DEW dataset \cite{muller2017picture} 
 in an object-centric manner \cite{net2024transformer}, which requires ensembles of specialized models for each semantic category, the proposed methodology employs a single unified backbone capable of handling all categories jointly. This significantly scales well with the number of object categories.
All backbone architectures were trained using the same optimization setup to ensure experimental consistency. The AdamW optimizer jointly optimizes the backbone parameters, the learnable category proxies, and the auxiliary temporal embeddings ($f$, $\mu^*$, and $k^y$) with a learning rate of $10^{-4}$ during 10 epochs, using the \texttt{ProxyNCALoss} proposed by Movshovitz-Attias et al.  \cite{movshovitz2017no} via the \texttt{Pytorch-Metric-Learning} library \cite{musgrave2020metric}, with no explicit triplet mining. Training was conducted on a single \texttt{NVIDIA A40} GPU. Due to the large number of object detections (approximately 4M cropped sub-images across 50 object categories), each epoch requires approximately 15 hours of computation, with GPU utilization analysis confirming that the available resources are fully saturated throughout training.
\begin{table}
\centering
\caption{Comparison across backbone architectures and the CLIP baseline (K=10).}
\label{tab:backbones}
\setlength{\tabcolsep}{4pt}
\renewcommand{\arraystretch}{1.08}
\begin{tabular}{r|cc|cc|cc}
\toprule
& \multicolumn{2}{c|}{\textbf{Label+Label}} & \multicolumn{2}{c|}{\textbf{Image+Label}} & \multicolumn{2}{c}{\textbf{Image+Image}} \\
\midrule

\textbf{Backbone} &
\makecell{Obj.\\P@$K$} &
\makecell{Date\\P@$K$} &
\makecell{Obj.\\P@$K$} &
\makecell{Date\\P@$K$} &
\makecell{Obj.\\P@$K$} &
\makecell{Date\\P@$K$} \\
\midrule

CLIP (arithmetic, see Appendix \ref{sec:clip_baseline_b}) & .152 & .115 & .387 & .076 & .389 & .102 \\
CLIP (prompting, see Appendix \ref{sec:clip_baseline_a}) & .415 & .105 & .372 & .079 & .520 & .081 \\
\midrule
VGG + TDIR & .397 & .712 & .388 & .579 & .669 & .234 \\
ResNet + TDIR & \textbf{.489} & .803 & \textbf{.444} & .665 & .721 & .306 \\
ConvNeXt + TDIR & .443 & .824 & .416 & .664 & \textbf{.725} & .393\\
ViT + TDIR& .413 & \textbf{.863} & .355 & \textbf{.708} & .646 & \textbf{.429} \\

\bottomrule
\end{tabular}
\end{table}

\subsection{Evaluation Protocol}
\label{sec:eval_protocol}
We evaluate each of the three inference modes (Section~\ref{sec:inference}) using a unified set of metrics
built around two axes: \emph{categorical fidelity} (does retrieval respect the object class?) and
\emph{temporal fidelity} (does retrieval respect the target date?).  As a standard practice in date estimation \cite{palermo2012dating,muller2017picture}, we consider a correct date classification whenever the retrieved date is within the same lustrum (5  years period).

For each inference mode, we retrieve the top-$K$ images
for every valid query and report two complementary precision metrics:
\begin{align}
    \textbf{Object Precision@}K &: \frac{\#\{x_i : c_i=c\}}{K}, \\
    \textbf{Date Precision@}K   &: \frac{\#\{x_i : y_i \approx y\}}{K},
\end{align}
where $y_i \approx y$ denotes temporal correctness within the same lustrum (5-year window).  Given that the DEW dataset contains images from 1930-1999, the random baseline is settled at $100\times\frac{1}{70 / 5} = 7.14\%$ chance of returning a correctly classified image in terms of the date estimation. In the label-based mode, $c$ and $y$ are taken directly from the query labels;
in the image-based modes, $c$ is determined by $c(x_c)$ and $y$ by the target year or
the transferred residual $r_y$ (see Section \ref{sec:inference}).

\section{Results}
In this section, we present an exhaustive evaluation of our framework in a highly applied setup. First, in Table \ref{tab:backbones}, we compare several widely used Computer Vision backbones against two CLIP baselines for reference (see Appendix \ref{sec:clip_baseline} for the baselines implementation). We observe that our proposed approach follows the general trends observed in computer vision, with ConvNeXt and ViT emerging as the top-performing and overall comparable architectures. The CLIP baselines prove to be competitive on retrieving the correct category, but are not sufficiently sensitive to temporal information despite its demonstrated sensitivity to image dates in \cite{sonthalia2026rankability} and \cite{barancova2023blind}.
\begin{table}[b]
\centering

\setlength{\tabcolsep}{7pt}
\renewcommand{\arraystretch}{1.08}
\begin{tabular}{ccc|cc|cc|cc}
\toprule
& & & \multicolumn{2}{c|}{\textbf{Label+Label}} & \multicolumn{2}{c|}{\textbf{Image+Label}} & \multicolumn{2}{c}{\textbf{Image+Image}} \\
\midrule
\textbf{Swap} &
\textbf{Aux} &
\textbf{Norm} &
\makecell{Obj.\\P@$K$} &
\makecell{Date\\P@$K$} &
\makecell{Obj.\\P@$K$} &
\makecell{Date\\P@$K$} &
\makecell{Obj.\\P@$K$} &
\makecell{Date\\P@$K$} \\
\midrule
\xmark & \cmark & \cmark & .362 & .712 & .342 & .568 & .683 & .358 \\
\cmark & \xmark & \cmark & .355 & .822 & .343 & \textbf{.676} & .691 &.354 \\
\cmark & \cmark & \xmark & .358 & .816 & .349 & .675 & .708 & .362 \\
\cmark & \cmark & \cmark &\textbf{ .443} & \textbf{.824} & \textbf{.416} & .664 & \textbf{.725} & \textbf{.393} \\
\bottomrule
\end{tabular}
\caption{Ablation study (K=10, \texttt{ConvNext}) for our method including the usage of the swapping trick, auxiliary proxies and normalization of the temporal component.}
\label{tab:ablation}
\end{table}

\begin{figure}
    \centering
    \includegraphics[width=0.99\linewidth]{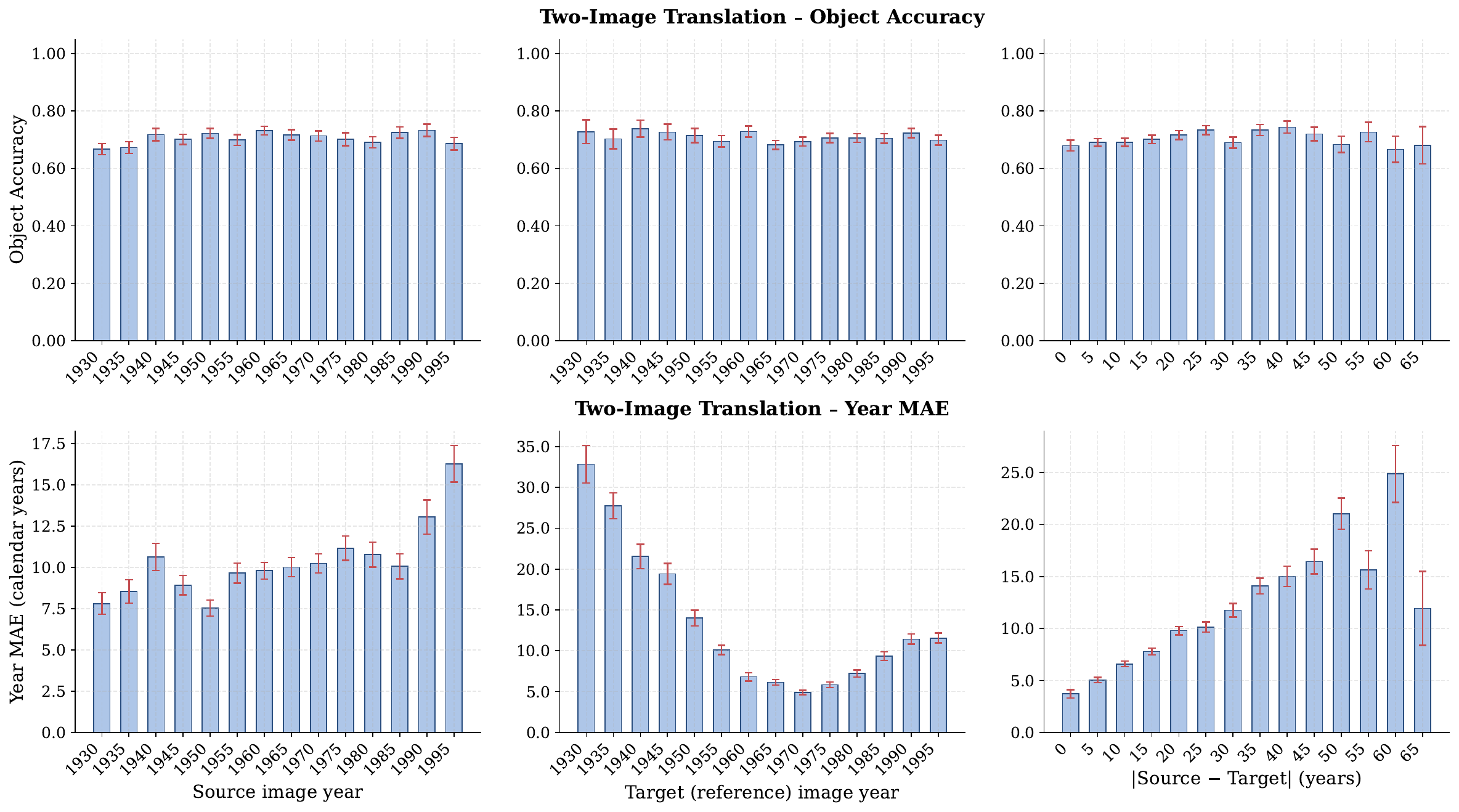}
    \caption{Calibration error bars when using query-by-examples year injection.}
    \label{fig:calibration_trans}
\end{figure}

Table \ref{tab:ablation} presents an ablation study. A key observation from the experiment is that the proposed swapping trick not only improves transitivity, as expected, but also leads to better date and object embedding subspaces. This is reflected in the improved performance observed not only for image-based queries but also for purely label-based inference, which does not rely on any transitive property induced by injecting a date into a temporal embedding. This observation partially supports the claim in Corollary \ref{cor:swapping_orthogonality} presented in Appendix \ref{sec:theorem_emergent}, heavily relies on the inclusion of a swapping term and states that this combination of losses imposes an implicit orthogonality regularization at the optimum.
\begin{table}[t]
\centering
\resizebox{.9\textwidth}{!}{
\small
\setlength{\tabcolsep}{4pt}
\renewcommand{\arraystretch}{1.2}
\begin{tabular}{>{\centering\arraybackslash}m{1cm} >{\centering\arraybackslash}m{1cm} | >{\centering\arraybackslash}m{1cm} >{\centering\arraybackslash}m{1cm} >{\centering\arraybackslash}m{1cm} >{\centering\arraybackslash}m{1cm} >{\centering\arraybackslash}m{1cm} >{\centering\arraybackslash}m{1cm}}
    \toprule
    \textbf{Category} & \textbf{Date} & \textbf{Top-1} & \textbf{Top-2} & \textbf{Top-3} & \textbf{Top-4} & \textbf{Top-5} & \textbf{Top-6} \\
    \midrule

    Poster &
    1975 &
    {\includegraphics[width=1cm]{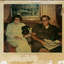}} &
    \greenbordered{\includegraphics[width=1cm]{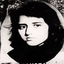}} &
    \greenbordered{\includegraphics[width=1cm]{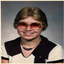}} &
    {\includegraphics[width=1cm]{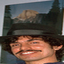}} &
    {\includegraphics[width=1cm]{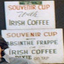}} &
    \redbordered{\includegraphics[width=1cm]{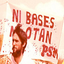}} \\[4pt]

    \includegraphics[width=1cm]{images/inference/qualitative_eval/poster_1980_6.png} &
    1980 &
    {\includegraphics[width=1cm]{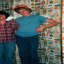}} &
    {\includegraphics[width=1cm]{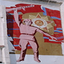}} &
    {\includegraphics[width=1cm]{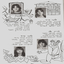}} &
    {\includegraphics[width=1cm]{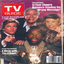}} &
    {\includegraphics[width=1cm]{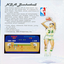}} &
    {\includegraphics[width=1cm]{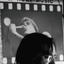}} \\[4pt]

    \includegraphics[width=1cm]{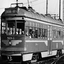} &
    \includegraphics[width=1cm]{images/inference/qualitative_eval/poster_1980_4.png} &
    {\includegraphics[width=1cm]{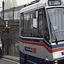}} &
    \greenbordered{\includegraphics[width=1cm]{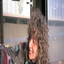}} &
    {\includegraphics[width=1cm]{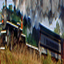}} &
    {\includegraphics[width=1cm]{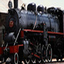}} &
    {\includegraphics[width=1cm]{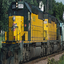}} &
    {\includegraphics[width=1cm]{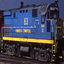}} \\[4pt]

    \includegraphics[width=1cm]{images/inference/qualitative_eval/poster_1980_4.png} &
    \includegraphics[width=1cm]{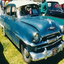} &
    \greenbordered{\includegraphics[width=1cm]{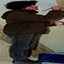}} &
    \greenbordered{\includegraphics[width=1cm]{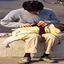}} &
    \greenbordered{\includegraphics[width=1cm]{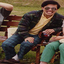}} &
    \greenbordered{\includegraphics[width=1cm]{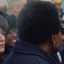}} &
    \greenbordered{\includegraphics[width=1cm]{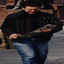}} &
    \greenbordered{\includegraphics[width=1cm]{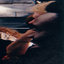}} \\[4pt]

    \includegraphics[width=1cm]{images/inference/qualitative_eval/car_1990.png} &
    \includegraphics[width=1cm]{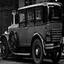} &
    \redbordered{\includegraphics[width=1cm]{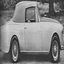}} &
    \redbordered{\includegraphics[width=1cm]{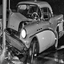}} &
    \redbordered{\includegraphics[width=1cm]{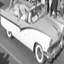}} &
    \redbordered{\includegraphics[width=1cm]{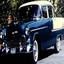}} &
    \redbordered{\includegraphics[width=1cm]{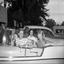}} &
    \redbordered{\includegraphics[width=1cm]{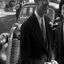}} \\[4pt]

    \includegraphics[width=1cm]{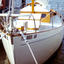} &
    \includegraphics[width=1cm]{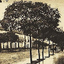} &
    {\includegraphics[width=1cm]{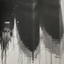}} &
    {\includegraphics[width=1cm]{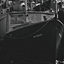}} &
    {\includegraphics[width=1cm]{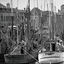}} &
    {\includegraphics[width=1cm]{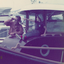}} &
    \redbordered{\includegraphics[width=1cm]{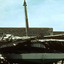}} &
    \redbordered{\includegraphics[width=1cm]{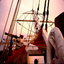}}\\
    \bottomrule
\end{tabular}}
\vspace{.25pt}

    \caption{Qualitative results with \categoryerror{ category error} and \temporalerror{temporal error}.}
    \label{tab:quaalitative_results}
\end{table}
The inclusion of an auxiliary term ($k_*$), instead of directly using the temporal proxy itself ($\mu^y_*$) in the swapping trick step, plays an important role in Image+Image inference, yielding significant gains in properly structuring the embedding space. However, when performing Image+Label inference, only a marginal performance gap is observed, likely due to the train–test mismatch at inference time caused by relying exclusively on the learned centroids ($\mu^y_*$). Lastly, the normalization of the temporal component appears to be the least impactful, although it provides a slight performance boost in Image+Image date estimation. Curiously, performance in the Label+Label object retrieval setup is maximized if and only if all three components are present in the training regime. Figure~\ref{fig:calibration_trans} reports object and date precision as a function of  source year, target year, and their absolute difference. Object precision remains stable  across all three axes, confirming that temporal transplantation does not corrupt the categorical content of the query: the category subspace is effectively insulated from the temporal injection. Date precision, by contrast, reveals a clear dependency on the target year and on the source-to-target gap, but notably \emph{not} on the source year itself. This asymmetry rules out a straightforward directional explanation: if the error were caused by the displacement vector $\mu^y$ acting unidirectionally, pushing representations forward or backward in time indiscriminately, one would expect a monotonic pattern with respect to direction. Instead, the error follows a U-shaped curve over the target year axis, with decades at the extremes of the temporal range being harder to reach regardless of where the source lies. Furthermore, precision degrades consistently as the temporal gap widens, suggesting that the optimisation does not perfectly  generalise large temporal displacements. Taken together, these findings indicate that the current additive translation mechanism, while effective at short and moderate temporal distances, is an approximation whose fidelity diminishes with displacement magnitude. A natural direction for future work is to replace the additive operator with a more expressive, potentially non-linear mechanism capable of modelling large temporal jumps more faithfully.

Although the numerical results may appear modest, a qualitative inspection in Table \ref{tab:quaalitative_results} reveals that the method generally behaves as expected. While errors in the predicted years are present, they mostly correspond to inaccuracies within the same lustrum rather than severe temporal artifacts (see Appendix \ref{sec:objectAnalysis} for a detailed analysis). This behavior may stem from two limitations. First, the temporal subspace is constrained to be object-agnostic, thus enforcing the same temporal displacement across all object categories. As introduced in Section \ref{sec:intro}, not every object evolves at the same pace, which may inherently limit the achievable performance under this design choice. In many cases, such as the fourth row, the temporal transplant is qualitatively correct, but the retrieved samples exhibit category ambiguity due to images lying near the boundary between two classes. In that example, the target category is ``poster'', yet images belonging to the ``men'' category are retrieved while still exhibiting the correct temporal attribution. As discussed, the error tends to increase with temporal distance, which can be observed in the fifth row, where the direction of the temporal shift is correct but the magnitude of the displacement is insufficient. Lastly, in some cases, such as the final row, the target category may simply contain too few old samples in the database because the object itself is inherently modern. As a result, the retrieved samples belong to the correct category but remain more recent than desired.

\section{Conclusions}
In this paper, we have presented TDIR, a representation learning framework that formalises the decomposition of historical image embeddings into orthogonal temporal and categorical subspaces. We have established the theoretical conditions under which such a decomposition holds, proved that the required orthogonality emerges naturally from joint optimisation, and characterised the error incurred when those conditions are only partially met. To our knowledge, this constitutes the first formal treatment of temporal decomposability in visual embeddings, addressing a limitation of prior work that either observes temporal structure post-hoc or exploits it implicitly without a principled formulation. Beyond the theoretical contributions, we have grounded the framework in the novel problem of Composed Historical Image Retrieval, where a query simultaneously specifies object content and a target time period. This setting reflects a natural and practically relevant archival task, yet no existing benchmark supported its evaluation. We address this by extending the DEW dataset with object-level detections across 50 categories, enabling the first evaluation of compositional retrieval over a diverse photographic archive covering seven decades. 

Limitations include the assumption of a shared temporal displacement across all object categories, which may not reflect the uneven pace at which different objects evolve historically, and the use of an additive transplantation operator whose fidelity diminishes at large temporal distances. Future work will explore non-linear temporal operators and category-specific temporal subspaces, as well as the extension of the benchmark to support zero-shot evaluation over unseen object categories.
\section*{Acknowledgments}
{\footnotesize
This work has been partially supported by the Spanish project PID2024-157778OB-I00, Ministerio de Ciencia e Innovación, the Departament de Cultura of the Generalitat de Catalunya, and the CERCA Program. Adrià Molina is funded with the PRE2022-101575 grant provided by MCIN / AEI / 10.13039 / 501100011033 and by ERDF/EU.}

\clearpage
{\footnotesize
\bibliography{egbib}
}
\clearpage
\appendix
\section*{Supplementary Material}

This document contains the supplementary material for the BMVC submission:

\begin{center}
    \textbf{Composed Historical Image Retrieval by Modeling Temporal Representations}
\end{center}

The supplementary material is organized as follows:

\vspace{0.5em}

\begin{table}[h]
\centering
\begin{tabular}{p{0.15\linewidth} p{0.75\linewidth}}
\hline
\textbf{Section} & \textbf{Content} \\
\hline
A & Derivations and Demonstrations \\
B & Photographic Material Calculation \\
C & Object-Centric Analysis \\
D & CLIP Baselines \\
E & Frequently Asked Questions (FAQs) \\

\hline
\end{tabular}
\end{table}

\vspace{1em}

The following sections provide theoretical derivations, empirical analyses,
and additional experimental details supporting the main manuscript.
\clearpage
\section{Derivations for Decomposable Representations}

\subsection{Chain rule of mutual information}
\label{sec:chain_rule}
Because the following derivations will heavily rely on rearranging terms in equalities using the chain rule, let us consider the definition on a general setting, using three random variables: $A, B, \text{and } C$.

The chain rule of mutual information states that:
\begin{equation}
   I(A,C;B) = I(A;B) + I(C;B \mid A) 
\end{equation}

Intuitively, this means that the information that the joint representation \((A,C)\) carries about \(B\) can be decomposed into the information provided by \(A\), plus the additional information provided by \(C\), after conditioning on \(A\).

\subsection{Temporally Decomposable Image Representations (TDIR)}
\label{sec:tfir_demonstraction}

As expressed in the main corpus of the manuscript, we note that category and year classes are decomposable as a design choice, which means that the expression in Definition~\ref{def:definition_classes_assumption}

\begin{equation}
p(\mu^c, c, \mu^y, y)=p(\mu^c, c)\, p(\mu^y, y)
\label{eq:independence_assumption}
\end{equation}

is satisfied. In short, the class centroids are distributed with no influence from the date information and vice-versa.

\begin{proposition}[Temporally Decomposable Image Representation]
Under the model assumptions of Definition~\ref{def:definition_classes_assumption}, we note that the following equality for the image representation holds:

\begin{equation}
    I(\mu^c, \mu^y;\, c,y) = I(\mu^c;\, c) + I(\mu^y;\, y)
\end{equation}

\end{proposition}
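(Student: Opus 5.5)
The plan is to reduce the claim to a standard fact: for independent pairs $(X_1,Y_1)$ and $(X_2,Y_2)$, the mutual information between the joint "features" $(X_1,X_2)$ and the joint "labels" $(Y_1,Y_2)$ splits into the two individual mutual informations. Here we take $X_1=\mu^c$, $Y_1=c$, $X_2=\mu^y$, $Y_2=y$. The only hypothesis used is the factorisation in Eq.~\eqref{eq:independence_assumption}, i.e.\ Definition~\ref{def:definition_classes_assumption}.

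Two routes are available. The direct route starts from the definition
\[
I(\mu^c,\mu^y;\,c,y)=\mathbb{E}\left[\log\frac{p(\mu^c,\mu^y,c,y)}{p(\mu^c,\mu^y)\,p(c,y)}\right].
\]
The assumption $p(\mu^c,c,\mu^y,y)=p(\mu^c,c)\,p(\mu^y,y)$ implies, after marginalising, the two further factorisations
\[
p(\mu^c,\mu^y)=p(\mu^c)\,p(\mu^y),\qquad p(c,y)=p(c)\,p(y).
\]
Substituting all three factorisations, the log-ratio splits as
\[
\log\frac{p(\mu^c,c)}{p(\mu^c)p(c)}+\log\frac{p(\mu^y,y)}{p(\mu^y)p(y)}.
\]
Taking expectations under the joint law, each term depends only on its own pair. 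Its expectation is therefore taken under that pair's marginal, which gives $I(\mu^c;c)+I(\mu^y;y)$.

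Because the appendix emphasises the chain rule of Section~\ref{sec:chain_rule}, I would also present a chain-rule derivation. First, with $A=\mu^c$, $C=\mu^y$, $B=(c,y)$,
\[
I(\mu^c,\mu^y;c,y)=I(\mu^c;c,y)+I(\mu^y;c,y\mid\mu^c).
\]
Next, expand $I(\mu^c;c,y)=I(\mu^c;c)+I(\mu^c;y\mid c)$. The second term vanishes: independence of $(\mu^c,c)$ from $y$ gives $y\perp\mu^c\mid c$. Then, since $(\mu^y,y)\perp(\mu^c,c)$, conditioning on $\mu^c$ does not change the joint law of $(\mu^y,y)$ or $c$, and we get
\[
I(\mu^y;c,y\mid\mu^c)=I(\mu^y;c,y)=I(\mu^y;y)+I(\mu^y;c\mid y).
\]
The last term vanishes by the same independence.

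The main obstacle is justifying these conditional-independence reductions carefully rather than the algebra. Each reduction must follow from the single joint factorisation. For example, to get $I(\mu^y;c,y\mid\mu^c)=I(\mu^y;c,y)$, one checks that $p(\mu^y,c,y\mid\mu^c)=p(\mu^y,y)\,p(c\mid\mu^c)$. This shows that $\mu^y$ is independent of $c$ given $y$ and $\mu^c$, so the dependence on $c$ drops out. I would state these marginal and conditional factorisations as a short preliminary lemma derived from Eq.~\eqref{eq:independence_assumption}. The direct density computation then serves as a cross-check that avoids these subtleties entirely.
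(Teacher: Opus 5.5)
Your proof is correct. Your second, chain-rule derivation is essentially the paper's argument. The paper makes the same first split $I(\mu^c,\mu^y;c,y)=I(\mu^c;c,y)+I(\mu^y;c,y\mid\mu^c)$ and the same cancellation of $I(\mu^c;y\mid c)$. It differs only in expanding the second term as $I(\mu^y;c\mid\mu^c)+I(\mu^y;y\mid\mu^c,c)$ and simplifying each piece via $(\mu^y,y)\perp(\mu^c,c)$; you instead drop the conditioning on $\mu^c$ first and then expand.

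Your primary, direct route is genuinely different and more economical. Once you note that the hypothesis forces $p(\mu^c,\mu^y)=p(\mu^c)p(\mu^y)$ and $p(c,y)=p(c)p(y)$, the result is just additivity of KL divergence over product measures. It needs none of the conditional-independence bookkeeping that the paper handles somewhat informally through an appeal to a ``Markov structure''. It also makes plain that only one direction is being proved. The paper's closing remark that the equality holds \emph{if and only if} the factorisation holds is not delivered by either argument, and it is false in this generality: constant $\mu^c,\mu^y$ with dependent $c,y$ give $0=0+0$ without the factorisation.

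One wording fix in your chain-rule route: conditioning on $\mu^c$ does change the law of $c$, to $p(c\mid\mu^c)$. The correct justification is that the law of $(\mu^y,y)$, and its independence from $c$, are preserved. That is exactly what your identity $p(\mu^y,c,y\mid\mu^c)=p(\mu^y,y)\,p(c\mid\mu^c)$ shows, and with it you get $I(\mu^y;c,y\mid\mu^c)=I(\mu^y;y)=I(\mu^y;c,y)$ as needed.
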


\begin{proof}
We apply the chain rule for mutual information to decompose the left-hand side:
\begin{equation}
I(\mu^c, \mu^y;\, c,y)
=
\underbrace{I(\mu^c;\, c,y)}_{\text{first term}}
+
\underbrace{I(\mu^y;\, c,y \mid \mu^c)}_{\text{second term}}
\end{equation}

We apply again the chain rule to $I(\mu^c;\, c,y)$:
\begin{equation}
    I(\mu^c;\, c,y) = I(\mu^c;\, c) + I(\mu^c;\, y \mid c)
\end{equation}
From Definition~\ref{def:definition_classes_assumption}, the category centroid does not provide information about the date of the image, therefore it does not provide information when conditioned on the category itself:
\begin{equation}
\label{eq:simplification_first_term}
I(\mu^c;\, c,y)=I(\mu^c;\, c)+\cancel{I(\mu^c;\, y \mid c)}= I(\mu^c;\, c)
\end{equation}

Hence the first term simplifies to $I(\mu^c;\, c)$ if and only if the model assumption ($\mu^c \perp y \mid c$) holds. Similarly, the term $I(\mu^y;\, c,y \mid \mu^c)$ is expanded again by applying the chain rule of conditional mutual information:
\begin{equation}
\label{eq:simplification_second_term}
I(\mu^y;\, c,y \mid \mu^c)=I(\mu^y;\, c \mid \mu^c)+I(\mu^y;\, y \mid \mu^c,c)
\end{equation}

From the Markov structure that emerges from Eq. ~\eqref{eq:independence_assumption},
\begin{equation}
\mu^c \rightarrow c
\qquad\text{and}\qquad
\mu^y \rightarrow y,
\end{equation}

together with the independence assumption $\mu^y \perp (\mu^c,c)$, it follows that
\begin{equation}
I(\mu^y;\, c \mid \mu^c)=0,
\end{equation}
and conditioning on $(\mu^c,c)$ does not affect the dependence between $\mu^y$ and $y$, yielding
\begin{equation}
I(\mu^y;\, y \mid \mu^c,c)
=
I(\mu^y;\, y).
\end{equation}
Therefore,
\begin{equation}
I(\mu^y;\, c,y \mid \mu^c)
=
I(\mu^y;\, y).
\end{equation}

Combining the results in Equations \eqref{eq:simplification_first_term} and \eqref{eq:simplification_second_term}:
\begin{equation}
I(\mu^c, \mu^y;\, c,y)
=
\underbrace{I(\mu^c;\, c,y)}_{I(\mu^c;\, c) + 0}
+
\underbrace{I(\mu^y;\, c,y \mid \mu^c)}_{I(\mu^y;\, y)} = I(\mu^c;\, c) + I(\mu^y;\, y)
\end{equation}
which holds if and only if the initial assumption in Eq.~\eqref{eq:independence_assumption} is satisfied. 

\end{proof}

\begin{lemma}[Orthogonality of Centroid Subspaces]
\label{lemma:orthogonality}
Under Definition~\ref{def:definition_classes_assumption},
the category and temporal centroids are orthogonal:
\begin{equation}
    \operatorname{Cov}(\mu^c, \mu^y) = \mathbf{0},
\end{equation}
and the total representational variance decomposes as
\begin{equation}
    \operatorname{Var}(\mu^c + \mu^y)
    = \operatorname{Var}(\mu^c) + \operatorname{Var}(\mu^y).
\end{equation}
\end{lemma}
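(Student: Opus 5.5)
The plan is to derive both claims from the factorisation in Definition~\ref{def:definition_classes_assumption}, read as a statement about random vectors in $\mathbb{R}^d$. First, I would marginalise the labels out of $p(\mu^c, c, \mu^y, y) = p(\mu^c, c)\,p(\mu^y, y)$ by summing over $c \in \{1,\dots,C\}$ and $y \in \{1,\dots,Y\}$. This gives $p(\mu^c, \mu^y) = p(\mu^c)\,p(\mu^y)$, so the two centroid vectors are themselves statistically independent. I would do this step explicitly, because the lemma speaks only about $\mu^c$ and $\mu^y$, not the labels.

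Second, independence gives $\mathbb{E}[\mu^c (\mu^y)^\top] = \mathbb{E}[\mu^c]\,\mathbb{E}[\mu^y]^\top$. Fubini's theorem applies under the mild assumption that both centroids have finite second moments, which I will state, since the proxies are bounded learned vectors. Hence the cross-covariance matrix $\operatorname{Cov}(\mu^c,\mu^y) = \mathbb{E}[(\mu^c - \mathbb{E}\mu^c)(\mu^y - \mathbb{E}\mu^y)^\top]$ vanishes entrywise. This is the first claim, with ``orthogonal'' understood in the $L^2$ (centred) sense.

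Third, I would expand the variance of the sum:
\begin{equation*}
\operatorname{Var}(\mu^c+\mu^y) = \operatorname{Var}(\mu^c) + \operatorname{Var}(\mu^y) + \operatorname{Cov}(\mu^c,\mu^y) + \operatorname{Cov}(\mu^y,\mu^c).
\end{equation*}
The two cross terms are transposes of the matrix just shown to be zero, so the decomposition follows. The same identity holds for the scalar total variance, obtained by taking traces.

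The main obstacle is interpretive rather than technical. Zero covariance is orthogonality of the \emph{centred} random vectors in $L^2$. It is not automatically the geometric condition $\langle \mu^c, \mu^y\rangle = 0$ used later for retrieval. I would make the distinction explicit and note when the two coincide. If the centroid distributions are centred, for instance by absorbing a global mean into one component, then $\mathbb{E}\langle \mu^c,\mu^y\rangle = \operatorname{tr}\operatorname{Cov}(\mu^c,\mu^y) + \langle\mathbb{E}\mu^c,\mathbb{E}\mu^y\rangle = 0$. This gives orthogonality in expectation, and pointwise orthogonality is left to the emergent argument of Theorem~\ref{theorem:emergent_manuscript}.
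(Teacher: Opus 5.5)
Your proposal is correct and follows the paper's proof. In both, the labels are marginalised out of the factorisation in Definition~\ref{def:definition_classes_assumption} to obtain $p(\mu^c,\mu^y)=p(\mu^c)\,p(\mu^y)$, independence gives zero cross-covariance, and the variance of the sum is expanded. Your finite sums over the discrete labels (where the paper writes integrals and cites Fubini), the explicit finite-second-moment assumption, and your caveat that $\operatorname{Cov}(\mu^c,\mu^y)=\mathbf{0}$ is centred $L^2$ orthogonality rather than the geometric condition $\langle\mu^c,\mu^y\rangle=0$ are sound refinements that the paper's version does not make.
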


\begin{proof}
By Definition~\ref{def:definition_classes_assumption}, marginalising over $(c, y)$ and applying Fubini's theorem:
\begin{equation}
p(\mu^c, \mu^y)
= \int\!\!\int p(\mu^c, c)\cdot p(\mu^y, y)\,dc\,dy
= \underbrace{\int p(\mu^c, c)\,dc}_{p(\mu^c)}
  \cdot
  \underbrace{\int p(\mu^y, y)\,dy}_{p(\mu^y)}
= p(\mu^c)\cdot p(\mu^y),
\end{equation}
so $\mu^c \perp \mu^y$. Independence implies
$\operatorname{Cov}(\mu^c, \mu^y) = \mathbf{0}$,
and the variance decomposition follows immediately from the bilinearity of covariance.
\end{proof}

\subsection{Estimating the error in real scenarios}
\label{sec:tfir_demonstraction_error}
Because the total independence $\mu^c \perp \mu^y$ (Lemma~\ref{lemma:orthogonality}) cannot be guaranteed in practical terms, where data might contain severe spurious correlations, one must account for the incorporation of a noise vector $\varepsilon \in\mathbb{R}^d$ which entangles both year and category information:
\begin{equation}
v = f(x) = \mu^c + \mu^y+\epsilon \quad \text{and} \quad I(\epsilon;\, c,y) \neq 0
\end{equation}
Our model assumption is that epsilon can emerge due to correlation on the training data (hence, the labels $y$ and $c$) but that an \textbf{independent proxy vector can be learnt} despite this entanglement\footnote{With this, we do not negate the presence of noise and spurious correlations on the training; $\epsilon$ can heavily depend on the target labels, but one can impose a model restriction where $\mu^c$ and $\mu^y$ are learned to be orthogonal vectors.}:
\begin{equation}
\epsilon \perp (\mu^c, \mu^y)
\end{equation}

\begin{proposition}[Practical Learnability of TDIRs]
Under the TDIR framework, the discrepancy term $\delta \triangleq I(\varepsilon;\, c, y)$ 
depends only on the statistical structure of the training data and not on the 
choice of proxy vectors $\mu^c$ or $\mu^y$. 
\begin{equation}
I(v;\, c,y)
\leq
I(\mu^c;\, c)
+
I(\mu^y;\, y)
+
\delta \qquad \text{where} \qquad \delta
\triangleq
I(\varepsilon;\, c,y)
\end{equation}
Consequently, $\delta$ constitutes a 
data-dependent constant with respect to the optimization, and the category and year 
centroids can be learned freely to satisfy the orthogonality constraint 
$\langle \mu^c,\, \mu^y \rangle = 0$ without affecting this error bound.
\end{proposition}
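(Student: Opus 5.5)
The approach is to use the data-processing inequality together with the chain rule of Section~\ref{sec:chain_rule}, the Temporally Decomposable Image Representation proposition, and the modelling assumption $\epsilon \perp (\mu^c,\mu^y)$ stated above.

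\textbf{Step 1: pass to the triple.} Since $v = \mu^c + \mu^y + \epsilon$ is a deterministic function of $(\mu^c,\mu^y,\epsilon)$, we have the Markov chain $(c,y) \to (\mu^c,\mu^y,\epsilon) \to v$. The data-processing inequality then gives
\begin{equation}
I(v;\, c,y) \leq I(\mu^c,\mu^y,\epsilon;\, c,y).
\end{equation}

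\textbf{Step 2: split off the noise term.} Applying the chain rule with $(\mu^c,\mu^y)$ first yields
\begin{equation}
I(\mu^c,\mu^y,\epsilon;\, c,y) = I(\mu^c,\mu^y;\, c,y) + I(\epsilon;\, c,y \mid \mu^c,\mu^y).
\end{equation}
The first term equals $I(\mu^c;c)+I(\mu^y;y)$ by the Temporally Decomposable Image Representation proposition, assuming Definition~\ref{def:definition_classes_assumption} holds for the proxy components.

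\textbf{Step 3: bound the conditional term.} We need
\begin{equation}
I(\epsilon;\, c,y \mid \mu^c,\mu^y) \leq I(\epsilon;\, c,y) = \delta.
\end{equation}
This is the main obstacle. Conditioning can increase mutual information in general, so marginal independence $\epsilon \perp (\mu^c,\mu^y)$ alone does not suffice. I would first try to read the modelling assumption as the conditional structure $\epsilon \perp (\mu^c,\mu^y) \mid (c,y)$ together with $\epsilon \perp (\mu^c,\mu^y)$. Expanding $I(\epsilon;\, c,y,\mu^c,\mu^y)$ in two orders then gives
\begin{equation}
I(\epsilon;\, c,y \mid \mu^c,\mu^y) = I(\epsilon;\, c,y) + I(\epsilon;\, \mu^c,\mu^y \mid c,y) - I(\epsilon;\, \mu^c,\mu^y) = \delta,
\end{equation}
since both extra terms vanish. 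If only the marginal assumption is available, the fallback is to state explicitly that $\epsilon$ interacts with $(c,y)$ only through the data and not through the proxies. That is precisely the content of the footnoted model assumption, and it makes the conditional term equal to $\delta$.

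\textbf{Step 4: conclude.} Combining Steps 1--3 gives the stated inequality. The term $\delta = I(\epsilon;\, c,y)$ depends only on the joint law of $(\epsilon,c,y)$, which is fixed by the data and unaffected by the choice of $\mu^c,\mu^y$. Hence imposing $\langle \mu^c,\mu^y\rangle=0$, which is consistent with Lemma~\ref{lemma:orthogonality}, leaves the bound unchanged.
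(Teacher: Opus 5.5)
Your proof has the same skeleton as the paper's. The paper also passes from $v$ to the triple $(\mu^c,\mu^y,\epsilon)$ by the chain rule and the fact that $v$ is a function of the triple, obtaining $I(v;c,y)=I(\mu^c,\mu^y,\epsilon;c,y)-I(\mu^c,\mu^y,\epsilon;c,y\mid v)$; this is your data-processing step with the slack kept explicit. It then evaluates the proxy part by the TDIR proposition and invokes $\epsilon\perp(\mu^c,\mu^y)$ for the rest.

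The routes differ at the noise term, and there yours is the sound one. The paper writes the chain-rule remainder as $I(\epsilon;\mu^c,\mu^y\mid c,y)$ rather than $I(\epsilon;c,y\mid\mu^c,\mu^y)$. It equates this with $I(\epsilon;\mu^c,\mu^y)$ on the strength of marginal independence, which does not license that step and would make the term zero anyway. Then $I(\epsilon;c,y)$ appears in the boxed identity without derivation. Your two-way expansion of $I(\epsilon;c,y,\mu^c,\mu^y)$ is what actually produces $\delta$, and you correctly diagnose that marginal independence alone cannot.

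You can tighten Step 3 further. The inequality needs only $\epsilon\perp(\mu^c,\mu^y)\mid(c,y)$, since your identity then reads $I(\epsilon;c,y\mid\mu^c,\mu^y)=\delta-I(\epsilon;\mu^c,\mu^y)\le\delta$; marginal independence only upgrades this to equality. When the proxies are lookups $\mu^c=\mu^c_c$, $\mu^y=\mu^y_y$ as in Algorithm~\ref{alg:training}, this conditional independence is automatic, because $(c,y)$ fixes $(\mu^c,\mu^y)$. So your vague fallback is unnecessary. In that setting, additionally imposing marginal independence with distinct proxies is equivalent to $\epsilon\perp(c,y)$, i.e.\ $\delta=0$, so it is better left out.

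Finally, the Step 4 claim that the law of $(\epsilon,c,y)$ does not depend on the proxies is a modelling stipulation, not a consequence of the argument. For a fixed encoder, $\epsilon=v-\mu^c-\mu^y$ changes when the proxies do. The paper asserts this part in the same way, so it is not a defect relative to its proof, but you should present it as an assumption.
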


\begin{proof}
The real mutual information expression entangles $\epsilon$ with the label variables:
\begin{equation}
I(\mu^c, \mu^y, \epsilon;\, c,y) = I(\mu^c, \mu^y;\, c,y) + I(\epsilon;\, \mu^c, \mu^y\mid c,y)
\end{equation}
Because the model is assumed to be trained under an independence condition $\epsilon \perp (\mu^c, \mu^y)$,
\begin{equation}
I(\epsilon;\, \mu^c, \mu^y\mid c,y) = I(\epsilon;\, \mu^c, \mu^y)
\end{equation}
Therefore,
\begin{equation}
I(\mu^c, \mu^y, \epsilon;\, c,y) = \underbrace{I(\mu^c, \mu^y;\, c,y)}_{I(\mu^c;\, c) + I(\mu^y;\, y)} + I(\epsilon;\, \mu^c, \mu^y)
\end{equation}
To obtain the total expression for $I(v;\, c,y)$, we apply the \textbf{chain rule of mutual information}:
\begin{equation}
I(\underbrace{v}_{A},\, \underbrace{\mu^c, \mu^y, \epsilon}_{C};\, \underbrace{c,y}_{B})
= I(v;\, c,y) + I(\mu^c, \mu^y, \epsilon;\, c,y \mid v)
\end{equation}
Since $v = \mu^c + \mu^y + \epsilon$ is a function of $(\mu^c, \mu^y, \epsilon)$,
the joint tuple $(v, \mu^c, \mu^y, \epsilon)$ carries no more information about $(c,y)$
than $(\mu^c, \mu^y, \epsilon)$ alone:
\begin{equation}
I(v, \mu^c, \mu^y, \epsilon;\, c,y) = I(\mu^c, \mu^y, \epsilon;\, c,y)
\end{equation}
Isolating $I(v;\, c,y)$:
\begin{equation}
\label{eq:issolated_mutual_with_epsilon}
I(v;\, c,y) = I(\mu^c, \mu^y, \epsilon;\, c,y) - I(\mu^c, \mu^y, \epsilon;\, c,y \mid v)
\end{equation}

Substituting the expression in Eq. \eqref{eq:issolated_mutual_with_epsilon}:
\begin{equation}
\boxed{I(v;\, c,y) = I(\mu^c;\, c) + I(\mu^y;\, y) + I(\epsilon;\, c,y) - I(\mu^c, \mu^y, \epsilon;\, c,y \mid v)}
\end{equation}
where the last term $I(\mu^c, \mu^y, \epsilon;\, c,y \mid v)$ represents the information
lost by collapsing the three components into a single vector $v$. Identifying $\delta$:
\begin{equation}
\delta = I(\epsilon;\, c,y) - \underbrace{I(\mu^c, \mu^y, \epsilon;\, c,y \mid v)}_{\geq\, 0}
\end{equation}
we can rewrite the exact relation compactly as:
\begin{equation}
I(v;\, c,y) = I(\mu^c;\, c) + I(\mu^y;\, y) + \underbrace{\delta}_{\leq\, I(\epsilon;\,c,y)}
\end{equation}
Since \textbf{mutual information is always non-negative}, $\delta \leq I(\varepsilon;\, c,y)$
provides an upper bound. To obtain a more precise approximation of $\delta$,
we expand $I(\varepsilon;\, c,y)$ via the chain rule:
\begin{equation}
I(\varepsilon;\, c,y) = I(\varepsilon;\, c) + I(\varepsilon;\, y \mid c)
\end{equation}
Under the assumption that category $c$ and year $y$ are independent,
$I(\varepsilon;\, y \mid c) \approx I(\varepsilon;\, y)$, giving:
\begin{equation}
\delta \leq I(\varepsilon;\, c) + I(\varepsilon;\, y)
\end{equation}
\end{proof}

This shows that $\delta$ is controlled by how much the noise $\varepsilon$
leaks information about each label separately. If the model successfully
disentangles $\mu^c$ and $\mu^y$ from $\varepsilon$, both terms vanish
and $\delta \to 0$, recovering the ideal case:
\begin{equation}
I(v;\, c,y) \approx I(\mu^c;\, c) + I(\mu^y;\, y)
\end{equation}
Thus, under the assumption that such orthogonal centroids can be learned (see Figure~\ref{fig:graphical_model}), the ideal mutual information equality in the practical case is consistent with the TDIR proposition up to an upper bound $\delta$ characterized by the entanglement of training data.
\begin{figure}
    \centering
    \begin{tikzpicture}[
  latent/.style={
    circle, draw, thick,
    minimum size=0.9cm, inner sep=0pt, font=\small
  },
  obs/.style={
    circle, draw, thick, fill=black!12,
    minimum size=0.9cm, inner sep=0pt, font=\small
  },
  lbl/.style={font=\scriptsize, text=black!60},
  elbl/.style={font=\tiny, text=black!65, inner sep=1pt, fill=white},
  edge/.style={->, >=Stealth, thin},
  sep/.style={dashed, black!22, thin},
]
 
\node[latent] (muc) at (0.0,  0)    {$\mu^c$};
\node[latent] (eps) at (3.5,  0)    {$\varepsilon$};
\node[latent] (muy) at (7.0,  0)    {$\mu^y$};
 
\node[obs]    (c)   at (0.0, -2.0)  {$c$};
\node[obs]    (v)   at (3.5, -2.0)  {$v$};
\node[obs]    (y)   at (7.0, -2.0)  {$y$};
 
\draw[edge] (muc) --
  node[elbl, left=2pt] {$p(c\mid\mu^c)$} (c);
\draw[edge] (muc) to[out=-28, in=148] (v);
\draw[edge] (eps) -- (v);
\draw[edge] (muy) to[out=-152, in=32]  (v);
\draw[edge] (muy) --
  node[elbl, right=2pt] {$p(y\mid\mu^y)$} (y);
 
\draw[sep] (1.75,  0.60) -- (1.75, -2.55);
\draw[sep] (5.25,  0.60) -- (5.25, -2.55);
 
\node[lbl, above=7pt of muc] {category centroid};
\node[lbl, above=7pt of eps] {noise};
\node[lbl, above=7pt of muy] {temporal centroid};
 
\node[lbl, below=7pt of c]   {category};
\node[lbl, below=7pt of v]   {representation};
\node[lbl, below=7pt of y]   {year};
 
\node[lbl] at (1.75, -2.95) {$\perp$};
\node[lbl] at (5.25, -2.95) {$\perp$};
 
\node[font=\scriptsize, text=black!55] at (3.5, -3.45)
  {$v = \mu^c + \mu^y + \varepsilon$};
 
\node[latent, minimum size=0.38cm]        (ll) at (5.2, -3.45) {};
\node[lbl, right=3pt of ll]                    {latent};
\node[obs,    minimum size=0.38cm]        (lo) at (7, -3.45) {};
\node[lbl, right=5pt of lo]                    {observed};
 \draw[edge] (muc) to[out=-28, in=148]
  node[elbl, above=2pt] {$p(v\mid\mu^c)$} (v);
\draw[edge] (muy) to[out=-152, in=32]
  node[elbl, above=2pt] {$p(v\mid\mu^y)$} (v);
\end{tikzpicture}
    \caption{Resulting graphical model from Proposition~\ref{prop:learnability}}
    \label{fig:graphical_model}
\end{figure}

The joint optimization of $\mathcal{L}^c$ and $\mathcal{L}^y$ implicitly enforces 
the orthogonality condition $\langle \mu^c,\, \mu^y \rangle = 0$ required by the 
TDIR definition, without imposing it as an explicit constraint. The proof relies 
on a preliminary result about the structure of the year proxies, which we establish first.
\clearpage
\subsection{TDIR as an Emergent Property of the Joint Loss}
\label{sec:theorem_emergent}
With all the properties and propositions derived above, we can then state that the following theorem holds:
\begin{theorem}[Emergent TDIR]
\label{thm:theorem_emergent}
Let $\mathcal{L}^c$ and $\mathcal{L}^y$ be the proxy loss functions defined over an 
embedding space $\mathbb{R}^d$. Then:

\textbf{Part I (exact case).} If $I(\epsilon\,;\,c,y) = 0$, joint minimization of 
$\mathcal{L} = \mathcal{L}^c + \mathcal{L}^y$ implies TDIR:
\begin{equation}
    \min_\theta\,\mathcal{L}^c + \mathcal{L}^y \implies 
    p(\mu^c, \mu^y, c, y) = p(\mu^c, c)\cdot p(\mu^y, y)
\end{equation}

\textbf{Part II (asymptotic case).} If $I(\epsilon\,;\,c,y) \neq 0$, joint 
minimization implies asymptotic TDIR:
\begin{equation}
    \min_\theta\,\mathcal{L}^c + \mathcal{L}^y \implies 
    \langle\mu^c,\,\mu^y\rangle \to O\!\left(\frac{I(\epsilon\,;\,c,y)}{\|\mu^y\|^2}
    \right) \quad \forall\,c,\,y
\end{equation}
\end{theorem}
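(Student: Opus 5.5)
The plan is to analyse the stationary conditions of the joint loss $\mathcal{L}=\mathcal{L}^c+\mathcal{L}^y$ from Eq.~\eqref{eq:total_loss}. I would first prove a preliminary lemma about the year proxies, and then derive orthogonality from the swapping term.

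\textbf{Step 1: the displacement proxies become category-independent.} Consider the transitive term $\mathcal{L}(\hat r_{A\to B},\mu^{y_B};\mathcal{P}^y)$. At a minimiser, the normalized residual $\hat r_{A\to B}$ must point along $\mu^{y_B}$ for every category $c$. Since $r_{A\to B}=v_A-\mu^c+k^y_B$, and the direct term gives $v_A-\mu^c\propto\mu^{y_A}+\varepsilon_A$, this requires $\mu^{y_A}+\varepsilon_A+k^y_B\propto\mu^{y_B}$ for all $c$. The vector $k^y_B$ is shared across categories and does not depend on $c$. Averaging over tuples with fixed $(y_A,y_B)$ should therefore give $k^y_B\approx\mu^{y_B}-\mu^{y_A}$ up to scale, with a residual $\mathbb{E}[\varepsilon\mid c,y]$ that must vanish, or be absorbed, uniformly in $c$. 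This yields Definition~\ref{def:temporal_trans}, and it also shows that the year residual $v-\mu^c$ cannot carry $c$-dependent structure.

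\textbf{Step 2: centering forces orthogonality.} Take the gradient of $\mathcal{L}^c$ with respect to $\mu^c$. At the optimum, $\mu^c$ is a softmax-weighted average of the embeddings $v=\mu^c+\mu^y+\varepsilon$ of category $c$, pooled over all years. Let $\mu^y_\parallel$ denote the component of the year contribution that lies along $\mu^c$. Any nonzero $\mu^y_\parallel$ would change with $y$ and would enter every residual $r=v-\mu^c$. Step 1 then makes the year logits $\hat r^\top\mu^{y'}$ depend on $c$, which contradicts the shared displacement. Equivalently, a first-order perturbation that rotates $\mu^y$ off $\mu^c$ strictly decreases $\mathcal{L}^y$ and leaves $\mathcal{L}^c$ unchanged, so at a minimiser $\langle\mu^c,\mu^y\rangle=0$.

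\textbf{Step 3: Part I.} When $I(\varepsilon;c,y)=0$, the orthogonal supports from Step 2, together with the fact that $\mu^y$ is learned only from $c$-stripped residuals, give $\mu^y\perp(\mu^c,c)$ and $\mu^c\perp(\mu^y,y)$. Together these yield the factorisation $p(\mu^c,\mu^y,c,y)=p(\mu^c,c)\,p(\mu^y,y)$. Lemma~\ref{lemma:orthogonality} and Proposition~\ref{prop:tdir} then give the consistent decomposition.

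\textbf{Step 4: Part II.} I would redo the stationarity equation from Step 2 while keeping $\varepsilon$. The inner product $\langle\mu^c,\mu^y\rangle$ is then balanced by a cross term $\langle\mathbb{E}[\varepsilon\mid c,y],\mu^y\rangle$. Dividing by $\|\mu^y\|^2$, which normalizes the angular year logits, gives $\langle\mu^c,\mu^y\rangle=O(\|\mathbb{E}[\varepsilon\mid c,y]\|/\|\mu^y\|)$. The conditional mean shift would then be bounded by mutual information through a Pinsker/Donsker–Varadhan-type inequality, which controls it by $I(\varepsilon;c,y)$. Using Proposition~\ref{prop:learnability}, $\delta$ does not depend on the proxies, so this bound is a constant of the optimisation.

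\textbf{Main obstacle.} Step 4 is the hardest part. Converting a geometric residual into an $O(I(\varepsilon;c,y)/\|\mu^y\|^2)$ bound is loose, because Pinsker's inequality naturally gives a square root, and moment assumptions on $\varepsilon$ (for instance bounded or sub-Gaussian) are needed. I would try first a sub-Gaussian transport inequality, and accept a $\sqrt{I}$ rate if the linear rate cannot be obtained.
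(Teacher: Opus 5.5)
Your outline follows the paper's skeleton: the shared $k^y$ removes category dependence from the residuals, a separate argument shows $\mu^c$ carries no year information, and Part~II is perturbative. Step~1 is essentially the paper's subtraction of $r^c_A+k^y_B\approx\mu^{y_B}$ and $r^{c'}_A+k^y_B\approx\mu^{y_B}$, with two caveats. First, $k^y_B$ is indexed by the target year alone, so it cannot approximate $\mu^{y_B}-\mu^{y_A}$ for every source year unless the $\mu^{y_A}$ coincide. Second, Definition~\ref{def:temporal_trans} needs no derivation, because $\Delta^y_{A\to B}=\mu^{y_B}-\mu^{y_A}$ telescopes identically.

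The genuine gap is Step~2. With the proxy loss as written, $\mathcal{L}(u,\mu^*;\mathcal{P})$ is unchanged when one vector $w$ is added to every element of $\mathcal{P}$, since all logits shift by $u^\top w$. Neither the residuals $\hat r_*$ nor $\mathcal{L}^c$ involve $\mathcal{P}^y$. Hence $\mathcal{L}^c+\mathcal{L}^y$ is invariant under $\mu^{y'}\mapsto\mu^{y'}+w$ for all $y'$, and any $w$ with $\langle w,\mu^c\rangle\neq0$ turns an orthogonal minimiser into a non-orthogonal one. So no perturbation of the year proxies can strictly decrease $\mathcal{L}^y$ as you claim. If you mean the year component of $v$ instead, moving it changes the logit $v^\top\mu^c$, so $\mathcal{L}^c$ is not left unchanged. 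Stationarity in $\mu^c$ also does not make $\mu^c$ an average of class-$c$ embeddings. It balances $\sum_{c_i=c}(1-s_c(v_i))\,v_i$ against $\sum_{c_i\neq c}s_c(v_i)\,v_i$, with $s_c$ the softmax weight of $\mu^c$; the class-mean picture belongs to a squared-error surrogate. Only the translation-invariant statement $\langle\mu^c,\mu^{y_A}-\mu^{y_B}\rangle=0$ can survive. That is exactly the geometric fact the paper's Part~I uses, derived from the category loss (not the year loss) by requiring $v^\top\mu^c$ to be uniform across years; it never asserts $\langle\mu^c,\mu^y\rangle=0$ there.

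Step~3 has a second gap: orthogonality of vectors does not give statistical independence. Lemma~\ref{lemma:orthogonality} goes from independence to zero covariance, not back. The paper instead argues that $I(\mu^y;c\mid\mu^c)$ and $I(\mu^c;y\mid\mu^y,c)$ vanish, gets additivity of mutual information from the chain rule, and identifies that with the factorisation. Note also that if $\mu^c,\mu^y$ are the proxies themselves, hence deterministic in $c$ and $y$, the factorisation is equivalent to $c\perp y$ in the data, which no minimisation can produce. You must fix which random vectors are meant before an independence argument can start.

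Step~4 inherits the same translation freedom, and its scaling is off. From $\langle\mu^c,\mu^y\rangle\approx\langle\mathbb{E}[\varepsilon\mid c,y],\mu^y\rangle$, dividing by $\|\mu^y\|^2$ bounds the projection coefficient $\langle\mu^c,\mu^y\rangle/\|\mu^y\|^2$, not the inner product. The inner product itself is of order $\|\mu^y\|\,\|\mathbb{E}[\varepsilon\mid c,y]\|$. Pinsker or a transport inequality then turns the mean shift into $\sqrt{I(\varepsilon;c,y)}$ at best, under moment assumptions. Accepting that proves a different statement, with $\sqrt{I}$ and $\|\mu^y\|$ in the numerator, not the stated $O(I/\|\mu^y\|^2)$.

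The paper's Part~II is more explicit. It replaces both proxy losses by squared-error surrogates, which pin each proxy and remove the translation freedom. It writes $\mu^c=\tilde\mu^c+\alpha\mu^{y_A}+\beta\mu^{y_B}$ and minimises the resulting quadratic form to get $\alpha^*=\langle\mu^{y_A},\epsilon^c_A+\epsilon^c_B\rangle/(2\|\mu^{y_A}\|^2)$. It then uses the shared $k^y$ for uniformity over categories. Yet its conversion $\mathrm{Cov}(\epsilon,\mu^y)\approx\|\epsilon\|\,\|\mu^y\|\sqrt{2I}$, substituted into $\alpha^*\|\mu^y\|^2$, also yields $O(\|\epsilon\|\,\|\mu^y\|\sqrt{I})$. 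The obstacle you flagged is therefore real. The paper does not supply the missing idea, and the linear rate with $\|\mu^y\|^{-2}$ does not follow from its intermediate step either.
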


\begin{proof}

\textbf{Part I: $I(\epsilon\,;\,c,y) = 0 \implies$ TDIR.}

The category proxy loss is minimized when the score $v^\top\mu^c$ is maximal and 
uniform across all years. Since $v_A^c = \mu^c + \mu^{y_A}$ and 
$v_B^c = \mu^c + \mu^{y_B}$ share the same category but differ in year, the 
minimum requires:
\begin{equation*}
    v_A^\top\mu^c = v_B^\top\mu^c \quad \forall\, y_A, y_B 
    \implies 
    \langle\mu^{y_A},\mu^c\rangle = \langle\mu^{y_B},\mu^c\rangle \quad \forall\, y_A,y_B
\end{equation*}
hence $I(\mu^c\,;\,y) = 0$. For the temporal loss, the shared displacement 
vectors $k^y$ impose, for any two categories $c \neq c'$ sharing the same years:
\begin{equation*}
    r_A^c + k^y_B \approx \mu^{y_B}, \qquad r_A^{c'} + k^y_B \approx \mu^{y_B}
\end{equation*}
Subtracting yields $r_A^c \approx r_A^{c'}$, so the score 
$r_A^c{}^\top\mu^y$ is independent of the category, hence $I(\mu^y\,;\,c) = 0$.
Applying the chain rule of mutual information:
\begin{equation*}
    I(\mu^c,\mu^y\,;\,c,y) = I(\mu^c\,;\,c) + 
    \underbrace{I(\mu^y\,;\,c\mid\mu^c)}_{=\,0} + 
    \underbrace{I(\mu^c\,;\,y\mid\mu^y,c)}_{=\,0} + 
    I(\mu^y\,;\,y) = I(\mu^c\,;\,c) + I(\mu^y\,;\,y)
\end{equation*}
This equality is equivalent, via the KL divergence and Fubini's theorem, to the 
factorization $p(\mu^c,\mu^y,c,y) = p(\mu^c,c)\cdot p(\mu^y,y)$.

\textbf{Part II: $I(\epsilon\,;\,c,y) \neq 0 \implies$ asymptotic TDIR.}

\medskip

The proxy loss $\mathcal{L}(u, \mu^*; \mathcal{P}) = -\log\,\sigma(u^\top\mu^*)$
is a softmax cross-entropy, which is smooth and strictly convex in a neighborhood
of its minimum. A second-order Taylor expansion around the minimizer yields a
local quadratic approximation of the form $\|u - \mu^*\|^2 + O(\|u - \mu^*\|^3)$.
We therefore adopt the squared-error surrogates as a standard local approximation:

\begin{align}
\mathcal{L}^c(T) &\;\propto\; \|v_A - \mu^c\|^2 + \|v_B - \mu^c\|^2,
\label{eq:euc_loss_c}\\
\mathcal{L}^y(T) &\;\propto\; \|\hat{r}_A - \mu^{y_A}\|^2 
+ \|\hat{r}_B - \mu^{y_B}\|^2 
+ \|\hat{r}_{A\to B} - \mu^{y_B}\|^2 
+ \|\hat{r}_{B\to A} - \mu^{y_A}\|^2.
\label{eq:euc_loss_y}
\end{align}

\noindent
This approximation is tight near convergence, where embeddings cluster around
their respective proxies, and the conclusions below hold to the same asymptotic
order as the Taylor remainder.

\medskip
Decompose $\mu^c$ over the temporal subspace as 
$\mu^c = \tilde{\mu}^c + \alpha\,\mu^{y_A} + \beta\,\mu^{y_B}$ with 
$\tilde{\mu}^c \perp \mu^y$. Under $v = \mu^c + \mu^y + \epsilon$, the year 
residuals are:
\begin{equation*}
    r_A^c = (1-\alpha)\mu^{y_A} - \beta\mu^{y_B} + \epsilon_A^c, \qquad
    r_B^c = (1-\beta)\mu^{y_B} - \alpha\mu^{y_A} + \epsilon_B^c
\end{equation*}
Substituting into the loss surrogates to $\mathcal{L}^y$ and expanding:
\begin{equation*}
\begin{aligned}
\mathcal{L}^y \propto\;&+ \|\epsilon_A^c\|^2 + \|\epsilon_B^c\|^2
- 2\alpha\langle\mu^{y_A},\epsilon_A^c+\epsilon_B^c\rangle - 2\beta\langle\mu^{y_B},\epsilon_A^c+\epsilon_B^c\rangle+
\\&+ \underbrace{
2\alpha^2\|\mu^{y_A}\|^2
+ 2\beta^2\|\mu^{y_B}\|^2
+ 4\alpha\beta\langle\mu^{y_A},\mu^{y_B}\rangle
}_{Q(\alpha,\beta)\geq0}
\\
\end{aligned}
\end{equation*}
The quadratic form $Q(\alpha,\beta)$ is positive semidefinite by 
Cauchy-Schwarz. The cross terms displace the minimum from $(0,0)$ to:
\begin{equation*}
    \alpha^* = \frac{\langle\mu^{y_A},\,\epsilon_A^c+\epsilon_B^c\rangle}
    {2\|\mu^{y_A}\|^2} + O(\epsilon^2), \qquad 
    \beta^* = \frac{\langle\mu^{y_B},\,\epsilon_A^c+\epsilon_B^c\rangle}
    {2\|\mu^{y_B}\|^2} + O(\epsilon^2)
\end{equation*}
For small correlations, the covariance between $\epsilon$ and $\mu^y$ satisfies 
$\mathrm{Cov}(\epsilon,\mu^y) \approx \|\epsilon\|\cdot\|\mu^y\|\cdot
\sqrt{2\,I(\epsilon\,;\,c,y)}$, which gives:
\begin{equation*}
    \langle\mu^c,\,\mu^y\rangle = \alpha^*\|\mu^y\|^2 + O(\epsilon^2) 
    = O\!\left(\frac{I(\epsilon\,;\,c,y)}{\|\mu^y\|^2}\right)
\end{equation*}
Global consistency follows from the shared $k^y$: a single $k^y_B$ cannot 
simultaneously compensate projections $\alpha \neq \alpha'$ across categories 
unless $(\alpha-\alpha')\mu^{y_A} = \epsilon_A^{c'} - \epsilon_A^c$, which 
cannot hold for all category pairs since $\epsilon$ is not a control variable 
of the optimizer. The unique globally consistent solution is therefore:
\begin{equation*}
    \langle\mu^c,\,\mu^y\rangle \to 
    O\!\left(\frac{I(\epsilon\,;\,c,y)}{\|\mu^y\|^2}\right) 
    \quad \forall\,c,y \qedhere
\end{equation*}

\end{proof}
\begin{corollary}[Swapping Trick Enforces Cross-Category Orthogonality]
\label{cor:swapping_orthogonality}

Under the conditions of Theorem~\ref{thm:theorem_emergent}, the shared 
displacement vectors $k^y$ enforce $\langle \mu^c, \mu^y \rangle \to 0$ 
uniformly across all categories.

\medskip

Since the same $k^y_B$ is used when translating residuals from any category, 
satisfying the swapping constraint simultaneously for two categories 
$c \neq c'$ with residual projections $\alpha \neq \alpha'$ onto $\mu^y$ 
would require:
\begin{equation}
    (\alpha - \alpha')\,\mu^{y_A} \;=\; \epsilon_A^{c'} - \epsilon_A^c
    \qquad \forall\; c,\, c'.
\end{equation}

\medskip

This cannot hold globally, since $\epsilon$ is not a control variable of 
the optimizer. The unique globally consistent solution is therefore 
$\alpha = \alpha' = 0$ for all $c,\, c'$, giving:
\begin{equation}
    \mathrm{Proj}_{\mu^y}(\mu^c) \;\to\; 0 \qquad \forall\; c,\, y.
\end{equation}

\end{corollary}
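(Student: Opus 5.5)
The plan is to reduce the corollary to the stationarity conditions of the transitive part of $\mathcal{L}^y$ in the squared-error surrogate \eqref{eq:euc_loss_y}, exactly as in Part II of Theorem~\ref{thm:theorem_emergent}. Fix a year pair $(y_A,y_B)$ and two categories $c\neq c'$. Following the theorem, decompose each category proxy as $\mu^c=\tilde\mu^c+\alpha\,\mu^{y_A}+\beta\,\mu^{y_B}$ with $\tilde\mu^c\perp\mu^y$, and similarly $\mu^{c'}$ with coefficients $(\alpha',\beta')$. The direct residuals then have the form $r_A^c=(1-\alpha)\mu^{y_A}-\beta\mu^{y_B}+\epsilon_A^c$, and analogously for $c'$.

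The first step is to write the swapping terms $\|\hat r^{c}_{A\to B}-\mu^{y_B}\|^2$ and $\|\hat r^{c'}_{A\to B}-\mu^{y_B}\|^2$, both of which involve the same shared $k^y_B$. At the joint minimum, and treating normalisation as a local rescaling near convergence, each term is driven to zero up to the Taylor remainder. This gives $r^c_A+k^y_B\approx\mu^{y_B}\approx r^{c'}_A+k^y_B$. Subtracting eliminates $k^y_B$ and yields
\[
(\alpha'-\alpha)\mu^{y_A}+(\beta'-\beta)\mu^{y_B}=\epsilon_A^{c'}-\epsilon_A^{c},
\]
which is the displayed condition of the corollary once it is projected onto $\mu^{y_A}$.

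The second step is to argue that this identity cannot hold for all pairs unless $\alpha=\alpha'$ and $\beta=\beta'$. The optimizer controls $\alpha,\alpha'$, but the right-hand side is a data-dependent random vector that varies per sample. Requiring one deterministic vector along $\mu^{y_A}$ to match a random difference for every sample and every pair $c,c'$ forces the left-hand side to equal its expectation. Under the model assumption $\epsilon\perp(\mu^c,\mu^y)$, that expectation projects to zero on $\mu^{y_A}$. Hence all categories share a common projection coefficient $\alpha_0$.

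The third step is to show $\alpha_0=0$. A common shift $\alpha_0\mu^{y}$ added to every category proxy is absorbed by the centering $r=v-\mu^c$, so it must be pinned down separately. I would use Part II of the theorem, where $\alpha^*$ is bounded by $O(I(\epsilon;c,y)/\|\mu^y\|^2)$. I would then add the category-loss argument from Part I: uniform scores $v^\top\mu^c$ across years force $\langle\mu^{y_A}-\mu^{y_B},\mu^c\rangle=0$. Averaging over $y$, and assuming the year proxies are centred by the softmax (which is invariant to a common shift), gives $\mathrm{Proj}_{\mu^y}(\mu^c)\to0$.

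The main obstacle is the second step. Turning "$\epsilon$ is not a control variable" into a rigorous statement requires the identity to be understood in expectation or almost surely over samples. It also needs some non-degeneracy assumption, for instance that $\epsilon_A^{c'}-\epsilon_A^c$ has nonzero variance along $\mu^{y_A}$. I would state that assumption explicitly rather than derive it.
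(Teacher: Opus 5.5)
Your steps 1--2 reproduce the paper's own argument. The paper subtracts the two swap constraints that share $k^y_B$, declares that the resulting identity ``cannot hold globally'' because $\epsilon$ is not a control variable, and jumps straight to $\alpha=\alpha'=0$. You are right that this jump skips a step. However, the step you use to reach even a common coefficient fails.

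Under the least-squares reading you adopt, stationarity of the swap term in the per-category coordinates $(\alpha,\beta)$, with the same $k^y_B$ for both categories, gives
\[
(\alpha'-\alpha)\,\|\mu^{y_A}\|^2+(\beta'-\beta)\,\langle\mu^{y_B},\mu^{y_A}\rangle=\langle \mathbb{E}[\epsilon\mid c',y_A]-\mathbb{E}[\epsilon\mid c,y_A],\,\mu^{y_A}\rangle .
\]
So equal projections hold exactly when the category-conditional means of $\epsilon$ agree along $\mu^{y_A}$. The independence $\epsilon\perp(\mu^c,\mu^y)$ does not deliver this on its own. Taken literally, since each proxy is a function of its label (distinct labels having distinct proxies), it is the statement $\epsilon\perp(c,y)$, i.e.\ $I(\epsilon;c,y)=0$. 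That is the Part I hypothesis, not the $I(\epsilon;c,y)\neq0$ setting of Part II in which the corollary's argument is placed.

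When $I(\epsilon;c,y)\neq0$, the shared $k^y_B$ absorbs only the category-independent part of the offset, and the learned proxies absorb the category-dependent part. The swap term therefore makes $\mathrm{Proj}_{\mu^y}(\mu^c)$ track the category-specific temporal bias of $\epsilon$ rather than equalise it. This matches the large category--year cosines the paper itself reports for entangled categories such as \emph{Footwear}.

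Your proposed non-degeneracy condition targets the wrong quantity. Nonzero variance of $\epsilon_A^{c'}-\epsilon_A^c$ along $\mu^{y_A}$ makes the sample-wise identity fail for every $(\alpha,\alpha')$, including $\alpha=\alpha'$, so it cannot single out equal projections; only the conditional mean matters. The same objection applies to the paper's ``not a control variable'' sentence, which proves too much.

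Step 3 does not remove the remaining freedom either, for four reasons.

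\emph{Part II is too weak.} It only bounds the projection by $O(I(\epsilon;c,y)/\|\mu^y\|^2)$, which vanishes only as $I(\epsilon;c,y)\to0$, i.e.\ in the exact case.

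\emph{Centring is a gauge choice, not a consequence of minimisation.} The softmax proxy loss is invariant under $\mu^y\mapsto\mu^y+s$ for all $y$ simultaneously, and the year proxies enter the training objective only through these softmaxes. Every shifted configuration is therefore equally optimal and changes $\langle\mu^c,\mu^y\rangle$ by $\langle\mu^c,s\rangle$. The only gauge-invariant quantity you can hope to control is $\langle\mu^c,\mu^{y_A}-\mu^{y_B}\rangle$. Nor can you invoke this invariance while working with the squared-error surrogate of steps 1--2, which is not shift-invariant.

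\emph{The Part I premise does not help.} The ``uniform scores across years'' claim is itself asserted rather than derived. Even granting it, it forces $\langle\mu^{y_A}-\mu^{y_B},\mu^c\rangle=0$ only if $\langle\mathbb{E}[\epsilon\mid c,y],\mu^c\rangle$ does not depend on $y$, which is again the exact case.

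\emph{It bypasses the claimed mechanism.} If step 3 did work, it would give orthogonality for each category separately from the category loss alone, with no use of $k^y$. It would then not establish what the corollary actually claims, namely that the shared displacement vectors are the mechanism.

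As it stands, your argument needs two explicit hypotheses rather than consequences: that the year-span component of $\mathbb{E}[\epsilon\mid c,y]$ does not depend on $c$ (essentially $I(\epsilon;c,y)=0$), and a gauge fixing of the year proxies. Neither follows from the swapping term.
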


\section{Photographic material calculation}
\label{sec:how_many_photos}

To estimate the proportion of photographic material in large archival collections, we surveyed three major public archives.

\paragraph{Archives Nationales (France).} Using the publicly available digitisation corpus dataset~\cite{archivesnationales_dataset}, a small available sample drawn from the 2025 digitisation batch was analysed. Of the total documents, 123 were non-textual, representing 23\% of the sample ($123/0.23 \approx 535$ documents in total). Among those non-textual items, 68 were strictly photographic, accounting for \textbf{approximately 13\%} of the full sample. 

\paragraph{Library of Congress (USA).} The collection can be divided into digitised and non-digitised holdings~\cite{loc_search}. Among the \emph{non-digitised} material, the catalogue lists 16M books, 3M newspapers, 1M periodicals, 0.5M manuscripts, 1.3M photographic and pictorial items, and 450M audiovisual items. Among the \emph{digitised} material, it lists 3M newspapers, 0.7M books, 0.5M manuscripts, 0.5M periodicals, and 1.2M photographs. Focusing on the textual-adjacent holdings (i.e.\ excluding the bulk audiovisual collection, which is a category of its own), photographic material represents:
\[
    \frac{1.3 + 1.2}{16 + 3 + 1 + 0.5 + 1.3 + 3 + 0.7 + 0.5 + 0.5 + 1.2} \approx \frac{2.5}{27.7} \approx \mathbf{9\%}.
\]
\paragraph{Portal de Archivos Españoles (Spain).} The portal reports approximately 40 million digital objects in total~\cite{pares_estadisticas}. According to~\cite{anuario2025}, roughly 2 million of these are classified as photographic objects (excluding videos, posters, and other non-textual formats), yielding a photographic fraction of approximately $2/40 = \mathbf{5\%}$.

\section{Object-Centric Analysis}
\label{sec:objectAnalysis}

The per-object analysis reveals two complementary sources of error in the composed retrieval framework, illustrated in Figure~\ref{fig:error_per_object}. When considering the \textit{reference} image (the image from which temporal information is extracted), objects with low Year MAE such as tires, shorts, glasses, and human faces act as reliable temporal vessels: their embeddings carry a strong and unambiguous date signal, suggesting minimal entanglement between their categorical and temporal subspaces. Conversely, objects such as wheels, vehicle registration plates, dresses, buildings, and men exhibit higher reference error, indicating that their visual representations encode date information less cleanly, likely due to higher intra-class visual variance across decades. On the \textit{target} side (the object whose category should be preserved after temporal transplantation), objects like posters, girls, trains, houses, and dresses are retrieved with relatively low temporal error, while ladders, human faces, tires, wheels, and hats show the highest MAE. This asymmetry suggests that temporally ambiguous objects are doubly penalised: they are poor sources of date information and poor recipients of temporal injection.

\begin{figure}[t]
    \centering
    \includegraphics[width=0.95\linewidth]{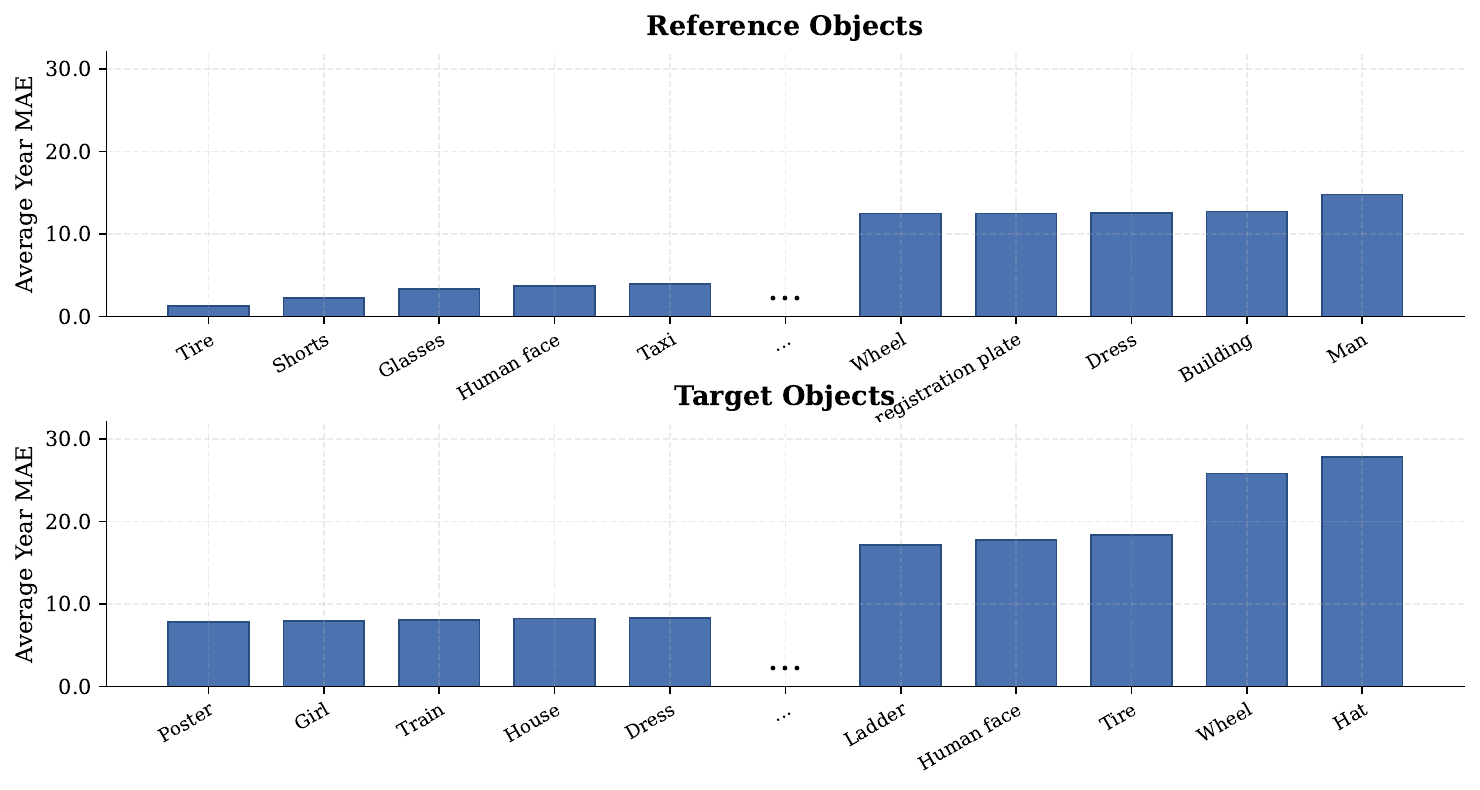}
    \caption{Average Year Error decomposed by reference object (top) and target object (bottom) in the Image+Image inference mode. Reference objects with low error are reliable vessels for date injection, indicating clean category representations with low temporal entanglement. Target objects with high error are temporally ambiguous, making them difficult to anchor to a specific period regardless of the source image used.}
    \label{fig:error_per_object}
\end{figure}

Figure~\ref{fig:radars_object_prec} reports per-object precision across all three inference modes using radar plots, for both object retrieval (top row) and date estimation (bottom row). In the object retrieval setting, the largest errors concentrate on categories that are visually sub-specific or easily confused with related classes: glasses, vehicle registration plates, and items of clothing such as socks and footwear tend to share visual features with broader categories, creating spurious correlations that degrade categorical fidelity. For date estimation (bottom row), the Label+Label mode achieves strong and consistent performance across virtually all object categories, confirming that the proxy-based subspace structure encodes temporal information reliably when labels are directly available. However, when queries are issued through images, performance degrades selectively for objects that lack distinctive temporal signatures, such as generic architectural elements or accessories, producing a marked and category-specific drop in precision.

\begin{figure}[t]
    \centering
    \includegraphics[width=0.32\linewidth]{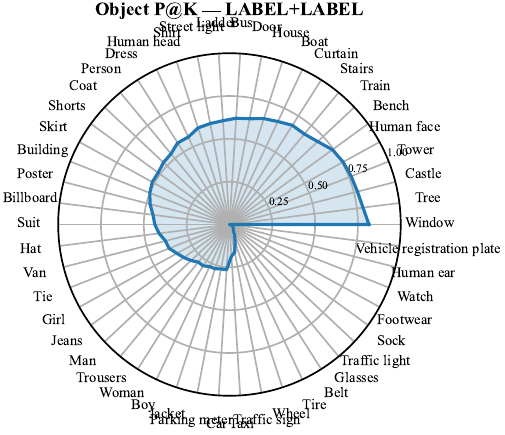}
    \includegraphics[width=0.32\linewidth]{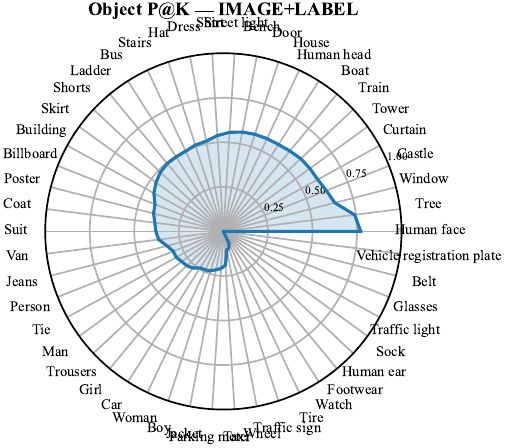}
    \includegraphics[width=0.32\linewidth]{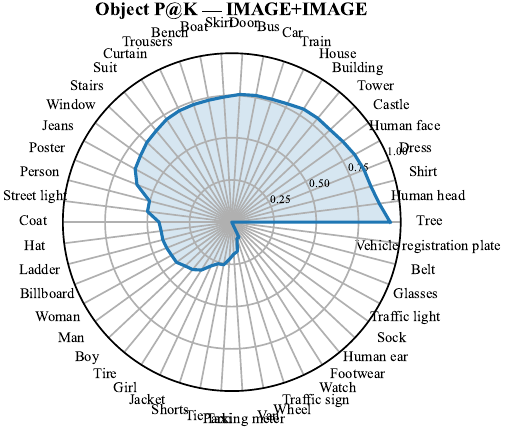} \\[4pt]
    \includegraphics[width=0.32\linewidth]{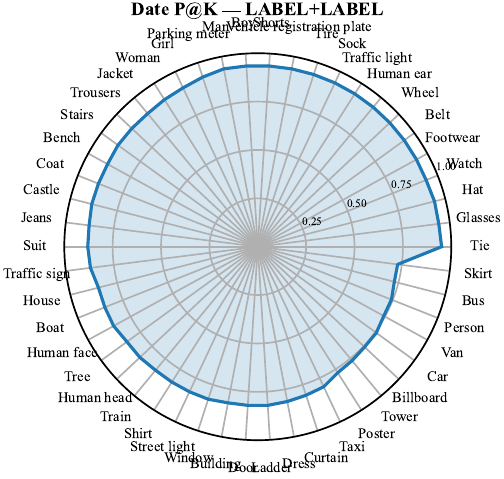}
    \includegraphics[width=0.32\linewidth]{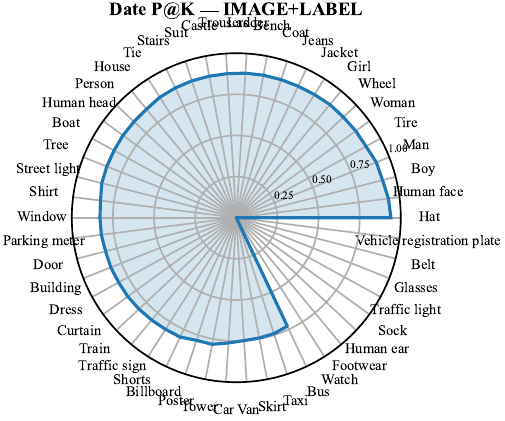}
    \includegraphics[width=0.32\linewidth]{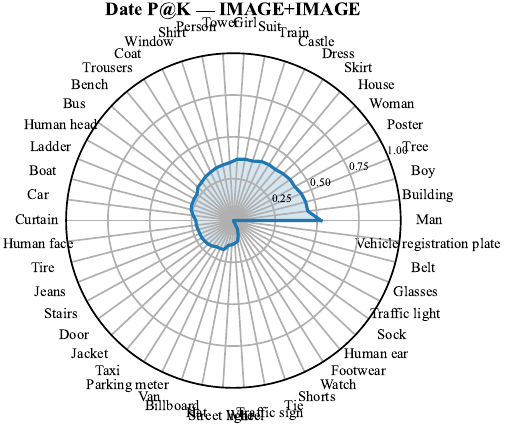}
    \caption{Per-object precision radar plots for object retrieval (top row) and date estimation (bottom row) across Label+Label, Image+Label, and Image+Image inference modes (left to right). Object retrieval errors concentrate on visually sub-specific categories such as glasses, vehicle registration plates, and footwear, which share features across class boundaries. For date estimation, Label+Label achieves consistently strong performance, while image-based modes reveal a pronounced and category-dependent performance gap for temporally ambiguous objects.}
    \label{fig:radars_object_prec}
\end{figure}

The temporal behaviour of each inference mode is further analysed in Figure~\ref{fig:error_per_year}. Date estimation precision degrades toward earlier decades for both Image+Label and Image+Image modes, consistent with the sparser representation of pre-1950 material in the DEW dataset. Notably, the Image+Image mode exhibits a reversal of the general trend observed in label-based settings: whereas date precision typically exceeds object precision under Label+Label and Image+Label queries, the Image+Image mode yields stronger object than date precision. This suggests that the categorical subspace is more robust to temporal transplantation than the temporal subspace is to cross-image injection, a finding aligned with the theoretical guarantee of Proposition~3, which ensures category preservation under orthogonal decomposition but imposes no analogous bound on the fidelity of the transplanted temporal signal.

\begin{figure}[t]
    \centering
    \includegraphics[width=0.49\linewidth]{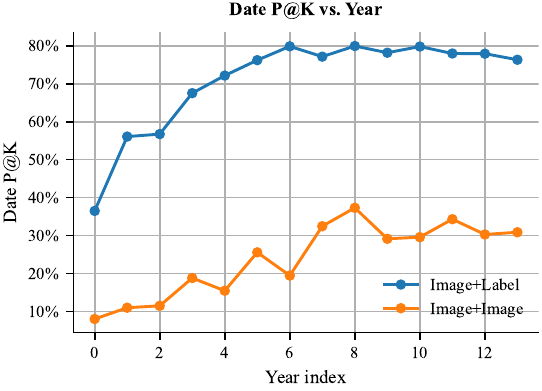}
    \includegraphics[width=0.49\linewidth]{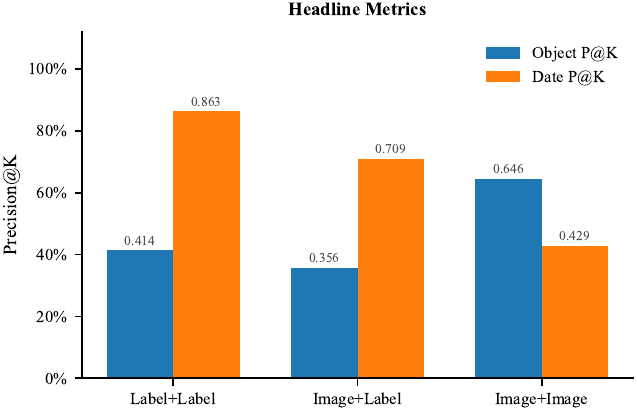}
    \caption{Left: date estimation precision per source year across inference modes. Both Image+Label and Image+Image exhibit degraded performance for earlier decades, with Image+Label remaining more stable across the temporal range. Right: scalar summary of object and date precision per inference mode. In Image+Image retrieval, the relationship between object and date precision inverts relative to label-based modes, indicating that temporal injection is a stricter operation than category preservation under the learned decomposition.}
    \label{fig:error_per_year}
\end{figure}

Figures~\ref{fig:dumbell_date} and~\ref{fig:dumbell_object} decompose the performance drop from Image+Label to Image+Image inference on a per-object basis, for date estimation and object retrieval respectively. Object retrieval degradation is more uniformly distributed across categories, suggesting that the category subspace remains largely stable under temporal transplantation for most classes, with isolated exceptions corresponding to categories with high inter-class visual overlap.

\begin{figure}[t]
    \centering
    \includegraphics[width=.879\linewidth]{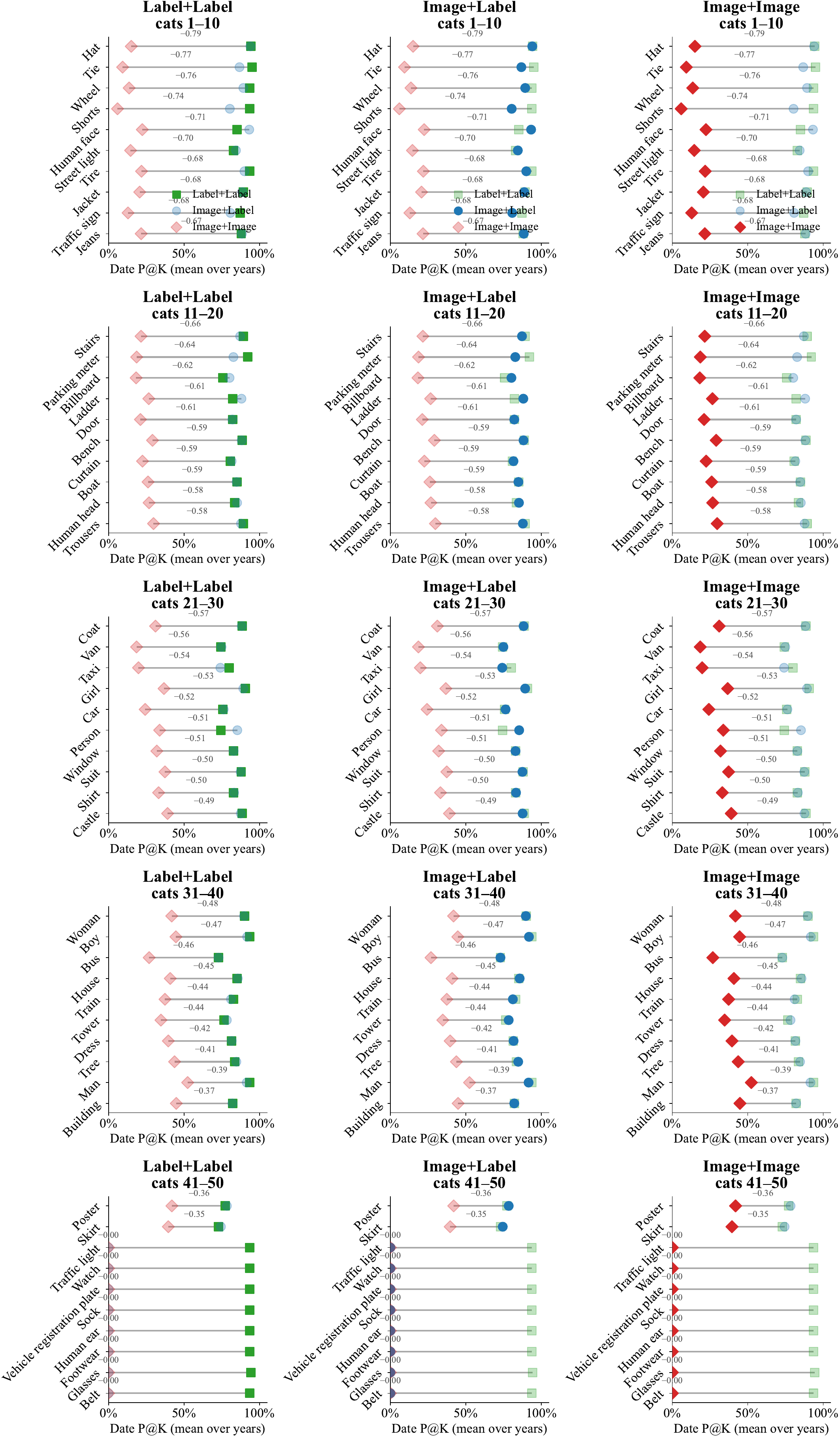}
    \caption{Per-object drop in date retrieval precision.}
    \label{fig:dumbell_date}
\end{figure}

\begin{figure}[t]
    \centering
    \includegraphics[width=.879\linewidth]{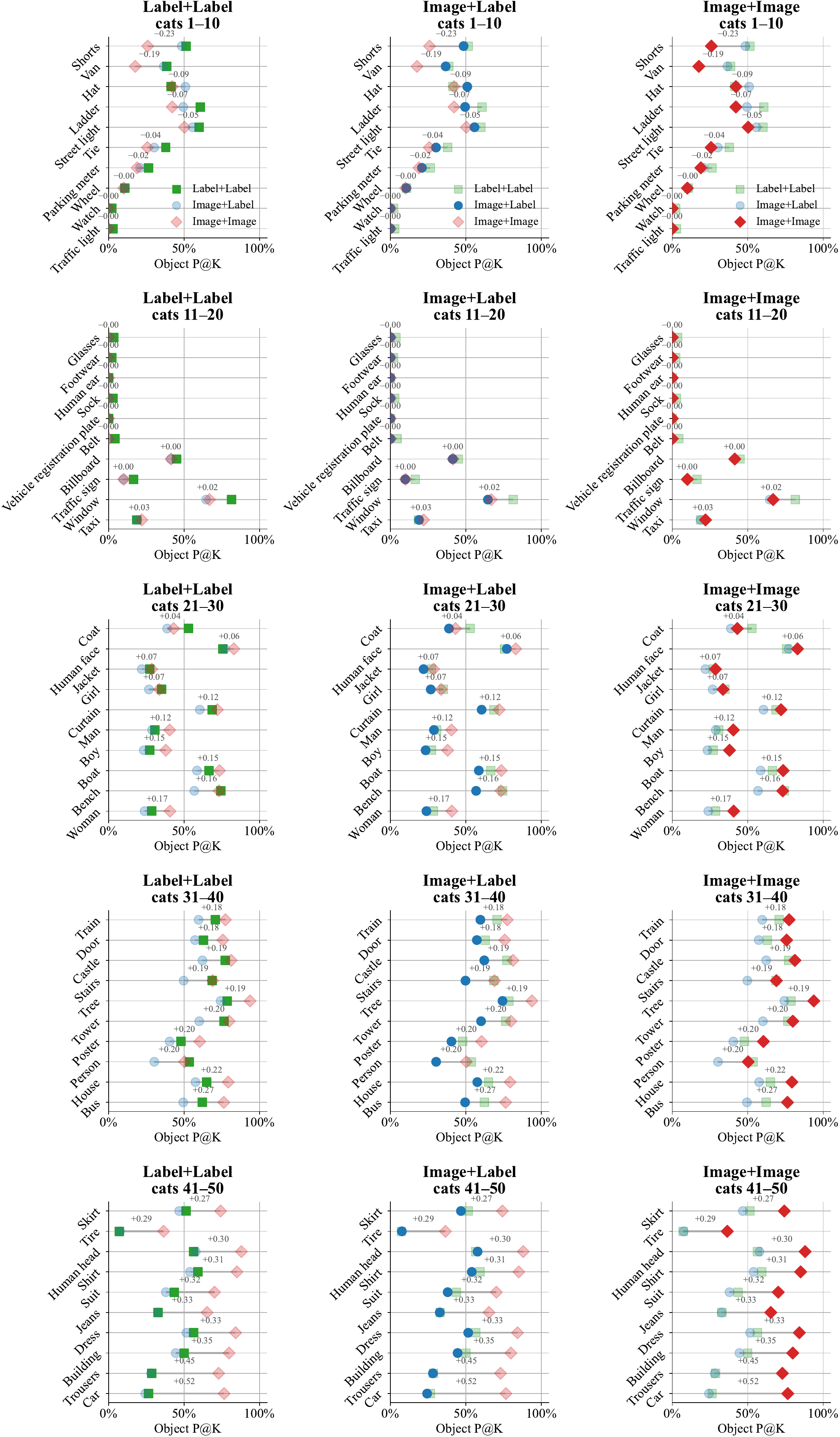}
    \caption{Per-object drop in object retrieval precision.}
    \label{fig:dumbell_object}
\end{figure}
\clearpage
\section{CLIP Baseline}
\label{sec:clip_baseline}
Both CLIP \cite{radford2021learning} baselines use OpenCLIP \cite{ilharco_gabriel_2021_5143773} ViT-B/32 \cite{dosovitskiy2020image} pre-trained on LAION-2B~\cite{Schuhmann2022LAION5BAO} as a frozen encoder, with no fine-tuning on DEW or on
any historical imagery. All image embeddings are $\ell_2$-normalised to unit norm prior to
any arithmetic operation or nearest-neighbour query, following the same convention as
TDIR. A single Annoy index~ \footnote{\url{https://github.com/spotify/annoy}} of $N_{\text{trees}}=50$ trees is built over the CLIP
vision embeddings of the full training split (identical images to those used for TDIR
training), so that retrieval pool and gallery are exactly matched across methods.
Nearest-neighbour search is performed in Euclidean space over the unit hypersphere,
which is equivalent to cosine similarity for $\ell_2$-normalised vectors. The top-$K=10$
neighbours are returned for every query, and precision is computed identically to
Section~\ref{sec:eval_protocol}.

\subsection{CLIP Arithmetic Baseline}
\label{sec:clip_baseline_b}

The arithmetic baseline replaces the learned proxy vectors $\mu^c$ and $\mu^y$ of TDIR
with CLIP text embeddings, exploiting the joint vision--language alignment acquired
during pre-training. For each object category $c$ we encode the prompt \emph{"A photo
of \{c\}"}, and for each target year $y$ (bucketed in 5-year lustrum bins from 1930 to
1999, yielding 14 bins) we encode \emph{"A photo in \{year\}"}. All text embeddings are
$\ell_2$-normalised after encoding, producing a category proxy matrix $T^c \in
\mathbb{R}^{C \times d}$ and a year proxy matrix $T^y \in \mathbb{R}^{Y \times d}$,
with $d=512$ for ViT-B/32. These matrices serve as drop-in substitutes for $P^c$ and
$P^y$ in all three inference modes of Section~\ref{sec:inference}, with no modification to the retrieval
procedure.
 
\paragraph{Label+Label.} The query vector is the un-normalised sum $q = T^c_c + T^y_y$,
directly mirroring the proxy-sum query $q = \mu^c + \mu^y$ of TDIR.
 
\paragraph{Image+Label.} The query is $q = V(x_c) + T^y_y$, where $V(x_c)$ is the
unit-norm CLIP vision embedding of the category image.
 
\paragraph{Image+Image (zero-label).} The reference image $x_y$ is sampled uniformly
from the training index. Its inferred category $\hat{c}$ is obtained by argmax cosine
similarity against all category text embeddings, $\hat{c} = \operatorname*{arg\,max}_{c'}
V(x_y)^\top T^c_{c'}$. The temporal residual is then extracted as $r_y = V(x_y) -
T^c_{\hat{c}}$, and the final query is $q = V(x_c) + r_y$.
 
The poor date precision reported in Table~\ref{tab:backbones} for this baseline is informative: despite
CLIP embeddings carrying rankable temporal information \cite{sonthalia2026rankability},
the arithmetic composition fails to disentangle temporal and categorical signals because
CLIP was never trained to produce orthogonal category and year subspaces. The residual
$r_y$ retains substantial categorical content, and the year text prompts encode only a
weak and diffuse notion of historical period that does not translate into a geometrically
consistent displacement on the visual embedding manifold. This confirms that temporal
composability is not an emergent property of vision--language pre-training, but requires
the explicit geometric structure imposed by TDIR.

\subsection{CLIP Prompting Baseline}
\label{sec:clip_baseline_a}

The prompting baseline replaces vector arithmetic with a two-stage retrieve-and-rerank
strategy. Rather than composing both signals into a single query vector, retrieval is
first performed using the primary modality alone, and the resulting candidate pool is
then re-ranked by cosine similarity to the secondary modality signal. This avoids
relying on cross-modal vector addition and instead lets each signal operate
independently on the unit hypersphere.
 
Concretely, a candidate pool of size $P=100$ is retrieved by Annoy nearest-neighbour
search using the primary modality embedding. The stored embeddings of all pool
candidates are then re-scored by their cosine similarity to the secondary modality
embedding, and the top-$K=10$ candidates after re-scoring are returned.
 
\paragraph{Label+Label.} The candidate pool is retrieved by $T^c_c$ alone; pool
candidates are then re-ranked by cosine similarity to $T^y_y$.
 
\paragraph{Image+Label.} The candidate pool is retrieved by $V(x_c)$; re-ranking uses
$T^y_y$. The pool is computed once per test image and reused for all 14 year bins,
amortising the retrieval cost across the full label set.
 
\paragraph{Image+Image.} The candidate pool is retrieved by $V(x_c)$; re-ranking uses
$V(x_y)$ directly, replacing the residual $r_y$ with the raw reference image embedding.
This sidesteps the category-inference step and provides a stronger re-ranking signal
when category and date co-vary in the reference image.
 
As shown in Table~\ref{tab:backbones}, the prompting baseline recovers competitive object precision
across all three modes, confirming that CLIP vision embeddings carry strong categorical
signal. Date precision, however, remains substantially below TDIR for all inference
modes even under reranking. This gap persists because reranking by $T^y_y$ or $V(x_y)$
can only promote candidates that already appear in the pool retrieved by the category
signal, and nothing in the CLIP representation space guarantees that temporally
relevant images rank highly under a category-only query. The contrast between the
two baselines and TDIR thus isolates the contribution of the learned orthogonal
decomposition: without it, object and temporal retrieval remain fundamentally
entangled regardless of the composition strategy employed.

\section{Frequently Asked Questions (FAQs)}
In this section, readers will find a series of answers to questions raised during peer review, discussions, and others. We hope they cana be useful in interpreting the data and results presented in the paper, as well as the method itself.
\subsection{Joint metric: Def.~1's independence assumption holds empirically.}
We added Precision@K requiring \emph{both} category and date correct simultaneously to all three retrieval modes, letting us test directly (rather than assume) whether treating category and date as independent (Def.~1) actually holds at inference. On Label+Label (K=10, a stratified 7\,407-crop pool):
\begin{center}
\textbf{Obj P@10 = 0.488 \quad Date P@10 = 0.718 \quad Joint P@10 = 0.349}
\end{center}
\noindent against $0.488\times0.718=0.350$ predicted under independence, a $0.4\%$ relative gap well within noise. Object- and date-correctness are therefore empirically independent: neither axis is traded off against, nor rides for free on, the other, exactly the behaviour Def.~1 and Thm.~1's joint optimisation are designed to produce.

\subsection{Proxy geometry corroborates this at the representation level.}
We further computed the full cosine matrix between the 50 category and 14 year proxies of the reported ConvNeXt+TDIR checkpoint (its cached index reproduces Table~2/3 exactly). Cross-block mean $|\cos|=0.204$, close to the $0.025$ floor for random unit vectors in $\mathbb{R}^{1024}$ at this scale. Fig.~1 shows both regimes side by side across \emph{all 14} year proxies: 27/50 categories (\eg\ \textit{Window}, \textit{Bus}) sit at that floor throughout, genuinely orthogonal as Thm.~1 predicts in the exact case. A smaller set of $\sim$15 categories with a plausible intrinsic appearance-era link (\textit{tie, belt, watch, sock, footwear,\ldots}) shows the bounded, data-dependent residual $\delta$ that Prop.~2 anticipates for the realistic case: \eg\ \textit{Footwear} peaks at $+0.87$ in 1930 and is 3--6$\times$ weaker by 1965 on. This residual tracks training-set size almost exactly (per-year mean $|\cos|$ vs.\ log sample count: $r=-0.90$, $p{<}10^{-4}$; 1930: 14k crops $\rightarrow 0.44$; 1985: 108k crops $\rightarrow 0.11$), giving $\delta$ a precise, measured, localized explanation rather than an abstract constant.

\begin{figure}[t]
  \centering
  \includegraphics[width=\linewidth]{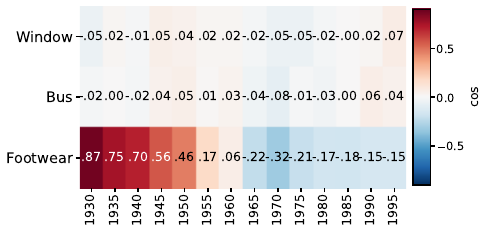}
  \vspace{-6pt}
  \caption{Category-year cosine similarity across \emph{all 14} year proxies for the two most orthogonal (\textit{Window}, \textit{Bus}) and the most entangled (\textit{Footwear}) of the 50 categories.}
  \label{fig:matrix}
  \vspace{-4pt}
\end{figure}

\subsection{Baselines and Previous Work}
\paragraph{Modern SSL pretraining (DINOv3).}
Because DINOv3 is not a specific architecture but a pretraining regime, we have loaded DINOv3 weights into the same ConvNeXt-Base backbone used throughout (Table~2) and train under the identical 50-category protocol. At epoch 1 already, cross-block mean $|\cos|=0.173$, already at or below the fully-converged ImageNet-supervised value ($0.204$), with the same ordinal decay replicating: the decomposition is therefore not an artifact of ImageNet-supervised pretraining.

\paragraph{On the suggested \cite{molina2022generic,net2024transformer} baselines.}
We are well aware of both \cite{molina2022generic}'s smooth-nDCG loss needs a relevance function derived from a numerical variable: it can replace our proxy loss on the year component, but never on the category, so it does not serve as a composed-retrieval loss. However, further results may incorporate such loss as date proxy loss, but the method itself is not capable of composing concepts. And, on the other hand, \cite{net2024transformer} is incompatible with our setup by any means: it runs object detection over multi-object images and requires a dedicated encoder per object label to predict a date (in \cite{net2024transformer} the category is not learned, it is passed as input).

\subsection{Ordinal structure \& unseen years.}
Cosine similarity between two year proxies decays smoothly with their temporal gap, although only pairwise transitivity (Def.~2) was ever enforced at training time:
\begin{center}
\vspace{-2pt}
\includegraphics[width=\linewidth]{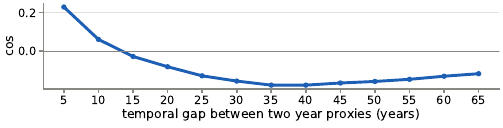}
\vspace{-8pt}
\end{center}
Leave-one-out check: a year's true proxy is consistently closer to the midpoint of its two neighbours than to either neighbour alone (\eg\ $\cos=0.66$ vs.\ $0.57$ at 1935; all 12 interior years), so the subspace already extrapolates locally.

\paragraph{Objects with weak temporal signature.}
The categories with the lowest year-entanglement above (\textit{window, bus, door, boat, tower}) are the slow-changing, architecture-type categories that one may flag as barely changing across decades. This is actually consistent, not contradictory, since their proxies correctly encode little date information because their appearance carries little.

\end{document}